\documentclass[11pt]{article}
\pdfoutput=1
\usepackage[margin=1in]{geometry}
\usepackage{amsmath,amssymb,amsthm}
\usepackage{graphicx}
\usepackage{booktabs}
\usepackage{multirow}
\usepackage{url}
\usepackage[numbers]{natbib}
\usepackage[colorlinks=true,linkcolor=blue,citecolor=blue,urlcolor=blue]{hyperref}
\usepackage{microtype}
\usepackage{enumitem}

\newtheorem{theorem}{Theorem}

\newtheorem{assumption}{Assumption}
\theoremstyle{remark}
\newtheorem{remark}{Remark}

\newcommand{\E}{\mathbb{E}}
\newcommand{\Var}{\mathrm{Var}}
\newcommand{\cS}{\mathcal{S}}
\newcommand{\cC}{\mathcal{C}}
\newcommand{\cT}{\mathcal{T}}
\newcommand{\cA}{\mathcal{A}}
\newcommand{\cE}{\mathcal{E}}
\newcommand{\cF}{\mathcal{F}}
\newcommand{\yhat}{\hat{y}}
\newcommand{\Vhat}{\widehat{V}}
\newcommand{\Nhat}{\widehat{N}}

\title{Error Certificates for KV-Cache Eviction\\
via Randomized Design}

\author{Peng~Xie and Amr~Alanwar%
\thanks{Peng Xie and Amr Alanwar are with the TUM School of Computation, Information and Technology, Department of Computer Engineering, Technical University of Munich, 74076 Heilbronn, Germany (e-mail: p.xie@tum.de; alanwar@tum.de).}}
\date{August 2026}

\begin{document}
\maketitle

\begin{abstract}
Deterministic KV-cache eviction keeps the top-$k$ tokens under an
importance score and deletes the rest, and after the deletion the
serving system cannot know what the eviction cost it on the current
query. We replace the deterministic tail with Poisson sampling at known
inclusion probabilities, which makes the eviction error identifiable
and turns a survey-sampling variance estimator over the retained set
into a per-step error certificate at one extra scalar per retained
token. On a thirty-turn assistant compressed to a 10\% cache budget, the
certificate-gated system answers 0.97 of recall questions against 0.09
for top-$k$, and for facts stated 26 to 30 turns earlier it recalls
97\% against 2\%. We prove that no estimator computable from the
information a deterministic scheme retains is consistent for its own
eviction error: evicted values can be altered so that everything
retained is unchanged while the true attention-output error grows
without bound. Under the Poisson design the certificate covers the
realized attention error in 96.9--97.7\% of 12{,}096 replay cells and
in 98.1--99.7\% on twelve further architectures. Randomization buys attribution, not prediction: a pre-registered
study on LongBench at 6k and 16k tokens (about 74{,}000 generations)
finds question-aware eviction at 25--50\% budgets nearly free and output
log-probability the better failure predictor, while the certificate
answers the question confidence cannot, separating eviction-induced
from inherent failures at AUC 0.65--0.75 against 0.47--0.54, and
schedules recomputation at 1.7--1.8 times the gain of random gating. On
real long-term conversations the gated system returns the full-cache
score inside the heavy-damage regime, and the rule that triggers it is
the same across five model families.
\end{abstract}

\section{Introduction}
\label{sec:intro}

Long-context inference stores a key--value pair per token per layer, so
the KV cache grows linearly with context and soon dominates memory. The
standard remedy is eviction: score each cached token by an importance
proxy, keep the top $k$, and delete the rest permanently. A large
literature refines the score \citep{zhang2023h2o, li2024snapkv,
xiao2024streamingllm, cai2024pyramidkv, feng2024adakv, zhang2024cam,
momentkv2026}, evaluated by quality at matched budgets.

We ask a question the score race leaves open: after eviction, can the
serving system know what the eviction cost it on the current query? For
deterministic selection the answer is no. Top-$k$ keeps a set that is a
function of scores, so conditioned on what is retained the evicted
values are unconstrained: altering them changes the true attention
output arbitrarily while every retained key, value, score, and
downstream statistic stays bit-for-bit identical
(Theorem~\ref{thm:impossibility}). Any self-diagnostic computed from the
retained state reads the same whether the eviction was harmless or
destroyed the answer. We call this property \emph{silent failure} and
measure it at scale: every deterministic self-signal we test sits at
chance on its own eviction-induced failures. Figure~\ref{fig:hero} shows
it in one dialogue: thirty turns in, an assistant compressed to 15\% of
its cache invents the office wifi password at 97.7\% confidence, while
the certified path red-flags its own invented first answer and
retrieves the truth; at a 10\% budget the gated system answers 29 of
30 recall questions across five such dialogues, top-$k$ 1 of 30.

\begin{figure}[t]
\centering
\includegraphics[width=0.9\linewidth]{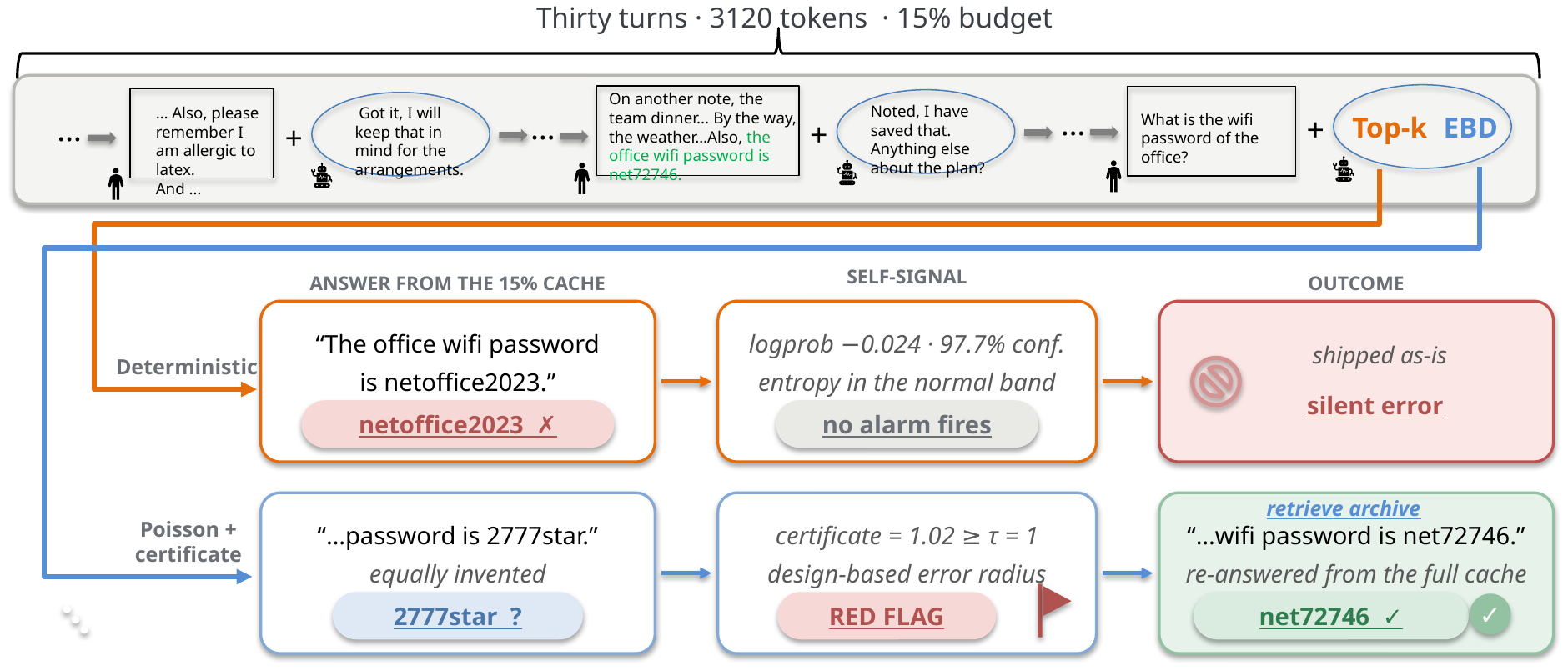}
\caption{At a 15\% budget, top-$k$ invents the wifi password at 97.7\%
confidence while the certified design red-flags its own first answer
and retrieves the truth (verbatim run; Qwen2.5-7B-Instruct; thirty-turn
system of Section~\ref{sec:vignettes}). Top: the conversation timeline;
turn 21 states the password (highlighted) and the question arrives ten
turns later, after eviction under deterministic top-$k$ and the
certified Poisson design (labeled EBD, eviction by design). Bottom:
each rule's answer, self-signal, and outcome.}
\label{fig:hero}
\end{figure}

The escape comes from survey statistics, not from a better score. If
the tail of the cache is evicted by Poisson sampling with inclusion
probabilities $\pi_i$ that the algorithm itself chooses, the design is
known and design-based inference applies \citep{horvitz1952, sen1953,
yates1953, sarndal1992}: an inverse-probability (H\'ajek) correction
removes the selection bias of the retained softmax, and a
Sen--Yates--Grundy variance estimator, computable from the retained set
alone, is unbiased for the variance of the linearized error
(Theorem~\ref{thm:identifiability}). The correction is one logit
offset, $\log(1/\pi_i)$, per retained tail token. An
empirical-Bernstein radius \citep{howard2021, waudbysmith2024} turns the
variance estimate into a per-step certificate, and an e-process
construction \citep{ramdas2023, vovk2021} extends it toward validity
over the whole autoregressive trajectory (Section~\ref{sec:cert}).

A pre-registered study, seven falsifiable claims with kill conditions
committed before the runs, locates where the certificate pays
(Section~\ref{sec:real}). Question-aware eviction at 25--50\% budgets
is nearly free on real tasks, output log-probability predicts failure
better than any cache-side signal, and the damage regime lives in
streaming and agent-memory settings, where history is compressed
before the question is known. There the certificate answers what
confidence cannot, whose fault a failure is, and that attribution
schedules recomputation better than random or confidence gating.
Randomization buys identifiability of the compression channel:
attribution, not prediction. Our contributions are the following.
\begin{itemize}[leftmargin=*,itemsep=1pt,topsep=2pt,parsep=0pt]
\item We prove that no estimator measurable with respect to the online
information of a deterministic, value-blind eviction scheme is
consistent for its own attention-output error
(Theorem~\ref{thm:impossibility}), and realize it in a real model,
where permuting evicted values changes the true error of top-$k$ by a
factor of 64 while every online statistic stays bit-identical
(Section~\ref{sec:replay}).
\item We give the design that restores identifiability,
certainty-plus-Poisson eviction with the H\'ajek correction as one
logit offset and an unbiased retained-set variance estimator
(Theorem~\ref{thm:identifiability}), whose per-step certificate covers
the realized error in 96.9--97.7\% of 12{,}096 replay cells at Spearman 0.94--0.98 and in 98.1--99.7\% of cells on twelve further architectures
spanning QK-norm, fused projections, soft-capping and mixture-of-experts
layers (Table~\ref{tab:zoo}).
\item We measure silent failure on a panel of four online self-signals
over thousands of scored generations: entropy and margin of
deterministic evictors sit at chance on their own eviction-induced
failures while the certificate reaches 0.77--0.86
(Section~\ref{sec:panel}).
\item A pre-registered study at 6k and 16k tokens (about 74{,}000
generations), agent-memory systems on five model families, real
conversations, and a 32B replication locate the value: attribution at
AUC 0.65--0.75 against 0.47--0.54 for output confidence, recomputation
scheduling at 1.7--1.8 times the gain of random gating, gated accuracy 0.97 against 0.09 for top-$k$ at a 10\% budget, 0.78--1.00 of the full-cache score on real long-term conversations at a resident cache of 400--1{,}000 tokens against 0.19--0.37 for top-$k$, and certificate-guided partial retrieval at 0.70 accuracy against 0.18 for
random blocks (Sections~\ref{sec:real} and~\ref{sec:vignettes}).
\end{itemize}

\section{Related work}
\label{sec:related}

\paragraph{Deterministic eviction and compensation.}
H2O \citep{zhang2023h2o}, SnapKV \citep{li2024snapkv}, TOVA
\citep{oren2024tova}, StreamingLLM \citep{xiao2024streamingllm},
PyramidKV \citep{cai2024pyramidkv}, and Ada-KV \citep{feng2024adakv}
select retained tokens deterministically from an importance proxy and
differ in the proxy or the budget allocation; CAOTE
\citep{goel2025caote} folds value information into the ranking, which
improves which tokens go, not what the system knows afterwards
(Remark~\ref{rem:valueaware}). Merging methods such as CaM
\citep{zhang2024cam} and MomentKV \citep{momentkv2026} fold evicted
mass into retained representatives, a post-hoc stratified ratio
correction of the point estimate. None maintains an estimate of its own
induced error, and Theorem~\ref{thm:impossibility} shows none can.

\paragraph{Randomized eviction and sparse attention with guarantees.}
MagicPIG \citep{chen2025magicpig} samples keys by locality-sensitive
hashing and corrects with self-normalized importance sampling, a point
estimator without a variance estimate. Nexus sampling \citep{nexus2026}
evicts by reservoir sampling and proves offline that a
Horvitz--Thompson estimator of retained utility is unbiased, but the
inclusion probabilities do not correct the attention computation and no
online error estimate is produced. VASE \citep{vase2026} adds
stochasticity to protect large-magnitude values, without probability
bookkeeping. vAttention \citep{desai2025vattention} and Quest
\citep{tang2024quest} give per-step guarantees or query-adaptive page
selection while keeping the full KV resident; permanent eviction cannot recompute a deleted token. A
fixed-contract diagnostic \citep{fixedcontract2026} studies when
value-aware selection helps, without a randomized design.

\paragraph{Design-based inference and anytime validity.}
The estimator layer is classical: Horvitz--Thompson estimation
\citep{horvitz1952}, the Sen--Yates--Grundy variance form
\citep{sen1953, yates1953}, Poisson and rejective sampling
\citep{hajek1964}, optimal allocation \citep{neyman1934}, and the design
effect \citep{kish1965}, consolidated in \citet{sarndal1992}. The
anytime layer is modern: time-uniform confidence sequences
\citep{howard2021}, empirical-Bernstein betting bounds
\citep{waudbysmith2024}, and e-values with their combination
\citep{ramdas2023, vovk2021}. Neither has been applied to permanent KV
eviction.

\section{Setup and theory}
\label{sec:theory}

\subsection{Objects}

Fix one attention head at decode step $t$; everything extends by
averaging over heads and layers. The cache is $\cC = \{(k_i, v_i)\}_{i=1}^{n}$,
the query is $q_t$, scores are $s_i = q_t^\top k_i/\sqrt{d}$, weights
$a_i = e^{s_i}$, and the full-cache head output is the self-normalized
mean
\begin{equation}
y_t \;=\; \frac{\sum_{i \in \cC} a_i v_i}{\sum_{i \in \cC} a_i}.
\label{eq:full}
\end{equation}
An \emph{eviction mechanism} maps the prefill state to a retained set
$\cS \subseteq \cC$; tokens outside $\cS$ are deleted permanently. The
\emph{online information} $\cF_t$ available to the serving system at step
$t$ is the retained KV pairs, the design (any sampling probabilities the
mechanism used), the query history, and every quantity computed from
these. Evicted values are not $\cF_t$-measurable. The estimation target
is the error $\|\yhat_t - y_t\|$ of whatever output $\yhat_t$ the system
computes from $\cS$.

\subsection{Impossibility for deterministic eviction}

\begin{theorem}[Unidentifiability]
\label{thm:impossibility}
Let the eviction mechanism be deterministic and value-blind, meaning
$\cS$ is a function of the keys, scores, and query history only. Then for
every $\cF_t$-measurable estimator $\widehat{E}$ and every $M > 0$ there
exist two cache configurations that generate identical online information
$\cF_t$ and whose true errors $\|\yhat_t - y_t\|$ differ by more than
$M$. Consequently no $\cF_t$-measurable estimator of the eviction error
is consistent, uniformly over cache configurations.
\end{theorem}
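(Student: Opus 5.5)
The plan is a two-point (Le Cam style) indistinguishability argument. We build two caches that agree on every key, every retained value, and the query history, and differ only in the value vector of one evicted token. Because the mechanism is deterministic and value-blind, $\cS$ is a function of keys, scores, and query history only. Keys and queries are identical in the two configurations, so scores are identical and the same set $\cS$ is retained. The retained KV pairs are then identical. There is no sampling design to differ, and the query history is identical by construction. Every quantity computed from these objects therefore coincides, so $\cF_t$ is literally the same in both configurations. Any $\cF_t$-measurable $\widehat{E}$ takes the same value in both, and the output $\yhat_t$ is also identical, since it is computed from $\cS$ alone.

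The construction proceeds as follows. Take any cache with $n \ge 2$ and budget $|\cS| < n$, so at least one token $j \notin \cS$ exists; its weight satisfies $a_j = e^{s_j} > 0$. Configuration A uses values $v_i$. Configuration B sets $v_j' = v_j + \lambda u$ for a unit vector $u$ and scalar $\lambda > 0$, leaving everything else unchanged. From \eqref{eq:full},
\[
y_t^{B} - y_t^{A} = \frac{a_j}{\sum_{i\in\cC} a_i}\,\lambda u ,
\]
since the denominator does not involve values. Write $w_j = a_j/\sum_i a_i > 0$. The reverse triangle inequality gives
\[
\bigl|\|\yhat_t - y_t^{B}\| - \|\yhat_t - y_t^{A}\|\bigr| \ge \|y_t^{B}-y_t^{A}\| - 0 .
\]
That is not quite the direction needed, so use instead $\|\yhat_t - y^B_t\| \ge \lambda w_j - \|\yhat_t - y^A_t\|$. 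Then choose $\lambda > (M + 2\|\yhat_t - y^A_t\|)/w_j$, which makes the two errors differ by more than $M$. Multi-layer and multi-head settings pose no problem. We perturb only a value at the chosen layer/head and step $t$, and the keys at later layers are downstream of values. To avoid that subtlety, I would state the construction for the single head fixed in the setup, with keys and queries treated as given, matching the theorem's definition of value-blindness.

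The consequence then follows. Suppose some $\widehat{E}$ were consistent uniformly over configurations, meaning $\sup |\widehat{E} - \|\yhat_t-y_t\|| \to 0$ along whatever index of consistency is used, or even merely bounded by some $\varepsilon$. Then on the pair A, B, with the common value $\widehat{E}$, the triangle inequality gives an error difference of at most $2\varepsilon$. This contradicts a difference exceeding $M > 2\varepsilon$. Scaling up the construction along the index (e.g.\ $n\to\infty$, with the same perturbation of one evicted token) keeps the gap unbounded, so no sequence of estimators converges uniformly.

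The main obstacle is definitional rather than computational. We must make precise that "every quantity computed from these" cannot implicitly depend on evicted values, for example through values that influenced the query history at earlier steps. I would handle this by fixing $t$ as the first decode step after eviction, or by taking the query history as exogenous, so that $\cF_t$ is generated by $(\cS, \{(k_i,v_i)\}_{i\in\cS}, q_{1:t})$ alone, and then arguing measurability by the Doob–Dynkin lemma.
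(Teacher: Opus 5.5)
Your proposal is correct and is essentially the paper's proof: perturb evicted values along a unit vector $u$, use value-blindness to keep $\cS$ (and hence $\cF_t$, $\widehat{E}$ and $\yhat_t$) unchanged, and let $y_t$ move linearly and without bound; the paper shifts every evicted value by $cu$ while you shift a single one, which makes no difference. Your explicit bound $\|\yhat_t - y_t^{B}\| \ge \lambda w_j - \|\yhat_t - y_t^{A}\|$ fills in the step the paper leaves as ``choose $c$ large,'' but delete the display before it, because the reverse triangle inequality gives $\le$ there, not $\ge$.
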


\begin{proof}
Fix any configuration and let $\cE = \cC \setminus \cS$ be the evicted
set, which is nonempty whenever eviction occurs. Replace $\{v_i\}_{i \in
\cE}$ by $\{v_i + c u\}_{i \in \cE}$ for a unit vector $u$ and scalar
$c$. Selection does not depend on values, so $\cS$ is unchanged; retained
pairs, scores, design, and query history are unchanged; hence $\cF_t$ and
with it $\widehat{E}$ and $\yhat_t$ are unchanged. The full output
\eqref{eq:full} shifts by $c u \cdot \sum_{i\in\cE} a_i / \sum_{i\in\cC}
a_i$, which is nonzero and scales linearly in $c$. Choosing $c$ large
makes the two errors differ by more than $M$.
\end{proof}

\begin{remark}[Value-aware scores]
\label{rem:valueaware}
If the score reads a finite vector of scalar summaries of each value,
for instance its norm \citep{vase2026, fixedcontract2026}, the
construction survives with summary-preserving perturbations: rotations
of evicted values preserve norms while moving $\sum_{i\in\cE} a_i v_i$
freely on a sphere whose radius the retained set cannot see.
\end{remark}

\subsection{Design: certainty plus Poisson tail, H\'ajek by logit offset}
\label{sec:design}

Existing randomized evictors discard the design after sampling
\citep{nexus2026, vase2026} or correct the point estimate without a
variance layer \citep{chen2025magicpig}. Keeping the inclusion
probabilities changes what is computable: a Poisson-sampled tail at
stored $\pi_i$ supports an unbiased output correction, an unbiased
variance estimator, and a certificate, from the retained set alone.

The mechanism keeps a \emph{certainty set} $\cA$ (score top slice plus
attention sinks and a recency window; $\pi_i = 1$) and applies to the
tail $\cT = \cC \setminus \cA$ independent Bernoulli retention with
inclusion probabilities
\begin{equation}
\pi_i \;=\; \mathrm{clip}\!\left(m \cdot
\frac{\mathrm{score}_i}{\sum_{j \in \cT}\mathrm{score}_j},\;
\varepsilon,\; 1\right), \qquad i \in \cT,
\label{eq:pi}
\end{equation}
where $m$ is the expected tail budget and $\varepsilon > 0$ a floor
(Poisson sampling \citep{hajek1964}). The output on the retained set is
the H\'ajek estimator
\begin{equation}
\yhat_t \;=\; \frac{\sum_{i \in \cS} (a_i/\pi_i)\, v_i}
{\sum_{i \in \cS} a_i/\pi_i},
\label{eq:hajek}
\end{equation}
which one line of code implements exactly: add $\log(1/\pi_i)$ to the
retained logit and let the softmax denominator do the renormalization.
No training, no new parameters, one extra scalar per retained tail
token. The certainty layer consists of the protected positions
(four attention sinks, thirty-two most recent tokens) and every token
whose allocation in \eqref{eq:pi} hits the upper clip;
$\varepsilon = 10^{-6}$ throughout.

\begin{assumption}[Known design]\label{as:design}
The probabilities $\pi_i$ in \eqref{eq:pi} are chosen by the algorithm,
stored, and bounded below by $\varepsilon$ on the tail.
\end{assumption}

\begin{assumption}[Bounded weights]\label{as:bounded}
Normalized weights are bounded: $a_i/(\pi_i \sum_{j\in\cC} a_j) \le B/m$
for all $i \in \cT$, guaranteed constructively by the floor
$\varepsilon$ and the certainty layer, which absorbs the head of the
score distribution.
\end{assumption}

Nothing is assumed about the quality of the score: a bad score
concentrates $\pi$ on the wrong tokens, which inflates the variance and
widens the certificate, but cannot bias the coverage, because validity
rests on Assumption~\ref{as:design} alone.

\begin{theorem}[Identifiability under Poisson design]
\label{thm:identifiability}
Write $N = \sum_{i \in \cC} a_i$ and let
$e_t^{\mathrm{lin}} = \tfrac{1}{N}\sum_{i \in \cT}
(\tfrac{I_i}{\pi_i} - 1)\, a_i (v_i - y_t)$
denote the first-order (linearized) error of \eqref{eq:hajek}, where
$I_i$ is the retention indicator. Under
Assumptions~\ref{as:design}--\ref{as:bounded},
\[
\Vhat_t \;=\; \frac{1}{\Nhat^2} \sum_{i \in \cS \cap \cT}
\frac{1 - \pi_i}{\pi_i^2}\, a_i^2\, \|v_i - \yhat_t\|^2,
\qquad \Nhat = \sum_{i \in \cS} a_i/\pi_i,
\]
satisfies $\E[\Vhat_t] = \Var(e_t^{\mathrm{lin}}) + O(B^2/m^2)$, and
$\Vhat_t$ is computable from the retained set alone. Under Poisson
sampling the Sen--Yates--Grundy double sum collapses to the single sum
above, so the cost is $O(|\cS \cap \cT|)$ per head per step.
\end{theorem}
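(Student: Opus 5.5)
The proof splits into three steps: compute the exact variance of the linearized error, show the oracle version of $\Vhat_t$ is unbiased for it, and then control the plug-in error from replacing $N, y_t$ by $\Nhat, \yhat_t$.

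\textbf{Step 1: exact variance.} Under Poisson sampling the indicators $I_i$, $i\in\cT$, are independent Bernoulli$(\pi_i)$. Hence $\E[I_i/\pi_i - 1]=0$ and $\Var(I_i/\pi_i)=(1-\pi_i)/\pi_i$. Write $z_i = a_i(v_i-y_t)/N$, a fixed (non-random) vector. Independence kills all cross terms, so
\[
\Var(e_t^{\mathrm{lin}}) \;=\; \sum_{i\in\cT}\frac{1-\pi_i}{\pi_i}\,\|z_i\|^2 ,
\]
where $\Var$ of a vector means the trace of its covariance. This is the Poisson specialization of the Sen--Yates--Grundy form. The joint inclusion probabilities are $\pi_{ij}=\pi_i\pi_j$, so the weight $(\pi_i\pi_j-\pi_{ij})/\pi_{ij}$ on every off-diagonal pair vanishes and the double sum collapses to the diagonal. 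That gives the single-sum and $O(|\cS\cap\cT|)$ claim.

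\textbf{Step 2: oracle unbiasedness.} Define the oracle estimator
\[
V^\ast = \sum_{i\in\cT} I_i \frac{1-\pi_i}{\pi_i^2}\|z_i\|^2 .
\]
Since $\E I_i=\pi_i$, we get $\E V^\ast = \Var(e_t^{\mathrm{lin}})$ exactly, by Horvitz--Thompson applied to the population total $\sum_i (1-\pi_i)\|z_i\|^2/\pi_i$. Two features need to be recorded here. $V^\ast$ sums only over $\cS\cap\cT$. Every term uses retained $(a_i,v_i,\pi_i)$ plus the two unknowns $N$ and $y_t$. Certainty tokens have $1-\pi_i=0$, so they contribute nothing.

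\textbf{Step 3: plug-in error (the main obstacle).} Replace $N$ by $\Nhat$ and $y_t$ by $\yhat_t$; $\Vhat_t$ is then computable from $\cF_t$, which settles the second claim. It remains to show $\E[\Vhat_t - V^\ast]=O(B^2/m^2)$. I would write $\Vhat_t - V^\ast = (N^2/\Nhat^2 - 1)\,\widetilde V + R$, where $\widetilde V$ is $V^\ast$ with $\yhat_t$ in place of $y_t$. Expanding $\|v_i-\yhat_t\|^2 = \|v_i-y_t\|^2 - 2\langle v_i-y_t,\yhat_t-y_t\rangle + \|\yhat_t-y_t\|^2$ gives the remainder $R$ from the last two terms.

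The estimates I would use are these, all from Assumption~\ref{as:bounded} together with bounded value norms (implicitly needed; I would state it explicitly):
\begin{itemize}
\item Each summand of $V^\ast$ is at most $(B/m)\cdot a_i\|v_i-y_t\|^2/N$, so $V^\ast=O(B/m)$ and $\Var(e_t^{\mathrm{lin}})=O(B/m)$.
\item $\Nhat/N - 1$ is a sum of independent mean-zero terms each bounded by $B/m$, with variance $O(B/m)$.
\item $\yhat_t - y_t = e_t^{\mathrm{lin}}\cdot N/\Nhat$.
\end{itemize}
A second-order Taylor expansion of $N^2/\Nhat^2$ and Cauchy--Schwarz then give products of two $O(\sqrt{B/m})$ fluctuations with an $O(B/m)$ factor. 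First-order terms either vanish in expectation or pair with a fluctuation, so the bias is $O(B^2/m^2)$ up to constants. The delicate point is that $\Nhat$ can be small, which makes $1/\Nhat^2$ unbounded in expectation. I would handle this first by noting that $\Nhat \ge \sum_{i\in\cA} a_i$, which is deterministic and positive once the certainty layer is nonempty (sinks are always kept); this bounds the ratio $N/\Nhat$ deterministically. I would then work on the high-probability event $|\Nhat/N-1|\le 1/2$, given by Bernstein's inequality, and absorb the complement as an exponentially small term.
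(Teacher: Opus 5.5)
Your proposal is correct and follows the paper's proof step for step. It computes the variance of $e_t^{\mathrm{lin}}$ from independence of the Poisson indicators, uses exact unbiasedness of the Horvitz--Thompson oracle $V^\ast$, and controls the plug-in of $\Nhat$ and $\yhat_t$ through the H\'ajek ratio. There is one harmless slip: a summand of $V^\ast$ is bounded by $(B/m)\,a_i\|v_i-y_t\|^2/(\pi_i N)$, not by your expression, which gives $V^\ast \le (B/m)\max_i\|v_i-y_t\|^2\,\Nhat/N$; that is still enough.

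Your Step~3 is in fact sharper than the paper's, which asserts $\Nhat/N = 1+O_p(B/m)$ and $\yhat_t - y_t = e_t^{\mathrm{lin}} + O_p(B^2/m^2)$. Your variance bound and your exact identity $\yhat_t - y_t = e_t^{\mathrm{lin}}N/\Nhat$ show that $\Nhat/N-1$ fluctuates at $O_p(\sqrt{B/m})$ and the H\'ajek remainder $\yhat_t - y_t - e_t^{\mathrm{lin}}$ is $O_p(B/m)$. The $O(B^2/m^2)$ bias therefore genuinely needs your observation that first-order terms enter only through covariances such as $\E[(\Nhat/N-1)V^\ast] = \sum_{i\in\cT}\frac{(1-\pi_i)^2a_i}{\pi_i^2N}\|z_i\|^2 \le (B/m)^2\max_i\|v_i-y_t\|^2$. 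It also needs the bounded-value assumption and the lower bound $\Nhat\ge\sum_{i\in\cA}a_i$, both of which the paper leaves implicit.
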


The proof (Appendix~\ref{app:proofs}) is the classical HT variance
identity plus a Taylor expansion of the ratio; the $O(B^2/m^2)$ term is
the price of plugging $\yhat_t$ and $\Nhat$ into the unknown $y_t$ and
$N$. The theorem concerns the linearized error of the per-layer
attention output; Sections~\ref{sec:synthetic}
to~\ref{sec:vignettes} measure its task-level meaning.

\subsection{A certificate with empirically validated coverage}
\label{sec:cert}

Per-step variance estimates become a running error certificate through
the empirical-Bernstein construction. For step $t$ define the radius
\begin{equation}
r_t \;=\; \frac{\sqrt{2 \Vhat_t \log(1/\delta)} \;+\; b_t
\log(1/\delta)}{\|\yhat_t\| + \epsilon_0},
\qquad
b_t = \frac{\max_{i \in \cS\cap\cT}
\sqrt{(1-\pi_i)/\pi_i^2}\; a_i \|v_i - \yhat_t\|}{\Nhat},
\label{eq:radius}
\end{equation}
the empirical-Bernstein bound shape \citep{howard2021, waudbysmith2024}
instantiated with the design variance estimator of
Theorem~\ref{thm:identifiability}, a variance term plus a range term,
both computable from the retained set, targeting the relative
attention-output error at per-step confidence $1 - \delta$. The
certificate deployed in every experiment is this radius at
$\delta = 0.1$, averaged over a head and layer subsample and maximized
over the first six decode steps; its finite-sample coverage is measured
directly (Section~\ref{sec:replay}). Confidence level and probed
steps are design constants fixed once for all experiments: they scale
the radius and the operating point of the label-free rule, not its
failure ranking (Table~\ref{tab:certabl}). The anytime extension
(Remark~\ref{rem:anytime}, Appendix~\ref{app:proofs}) composes an
empirical-Bernstein e-process along decode steps with Ville's
inequality and e-value averaging across heads and layers
\citep{waudbysmith2024, ville1939, howard2021, vovk2021}.

\section{The estimator works where it is defined}
\label{sec:replay}

The retained-set variance estimator covers the true attention error,
ranks it, and costs no accuracy, on thirteen architectures. We replay
prefills of Qwen2.5-1.5B \citep{qwen2025} offline, evict at budgets
$\{12.5\%, 25\%, 50\%\}$, and compare against the full-cache output at
12{,}096 (layer, head, query) cells. At $\delta = 0.1$ the certificate
covers the realized error in 96.9\%, 97.2\%, and 97.7\% of cells at the three budgets, and its Spearman correlation with the true error is 0.943, 0.965, and 0.979. The median
certificate is roughly three times the median realized error at the
25\% budget (0.0997 against 0.0317): loose as an absolute bound, strong
as a ranking signal. Median relative error at the 25\% budget is 0.0317
for Poisson-with-H\'ajek against 0.0447 for top-$k$ and 0.2386 for
uniform sampling, with the same ordering at every budget; half of the
gap is the H\'ajek correction, and ablating the logit offset forfeits
it. Theorem~\ref{thm:impossibility} is realized constructively in the
same model: six random permutations of the evicted values move the true
error of the top-$k$ output from 0.014 to 0.898, a factor of 64, while
every retained-set statistic of top-$k$ is bit-identical across all six
worlds, and the certificate covers the realized error in all of them.
The same replay on twelve further architectures, spanning per-head and
full-width QK-norm, fused qkv projections, logit soft-capping and
mixture-of-experts layers, gives coverage 0.981--0.997 and Spearman
0.973--0.988, with Poisson+H\'ajek below top-$k$ on every model and
Poisson without the offset above it (Table~\ref{tab:zoo}).

\section{From attention error to task failure: synthetic suites}
\label{sec:synthetic}

The certificate transfers from attention error to task failure in 16 of
16 model--task cells. On passcode retrieval with generation-time
eviction (Qwen2.5-1.5B, 252 runs), with the question available before
compression (\emph{aware}) H2O succeeds fully and Poisson at 88.5\%,
and with compression before the question arrives (\emph{stream}) both
collapse to zero at these budgets. On its own failures H2O's retained
entropy scores AUC 0.405 (chance, $n = 48$) while the certificate scores
0.812 ($n = 192$; $z = 10.2$). Across four base models (Qwen2.5-1.5B/7B,
Llama-3.1-8B \citep{llama2024}, Mistral-7B-v0.3 \citep{jiang2023mistral})
and four RULER-style tasks \citep{hsieh2024ruler} at 640 scored
generations per cell, the certificate--failure AUC is positive in 16 of
16 cells, each 95\% interval excluding 0.5, mean 0.836, range
0.65--0.97, and thresholding it gives a risk--coverage knob: error 0.229
at 30\% coverage against 0.575 at full coverage. The exploration tax
concentrates in multi-needle retrieval at the 25\% budget, where
spreading $\pi$ over twenty needles costs 39--44 points on three of four
models, and is near zero elsewhere; all importance-based methods, ours
included, collapse in the stream condition at these budgets. Budget
allocation and compensation do not change this: PyramidKV
\citep{cai2024pyramidkv} and a CaM-style merge \citep{zhang2024cam}
average 0.050 and 0.056 against 0.047 for SnapKV at the 25\% budget on
four instruct models and four tasks, neither can flag its own failures,
and the gated system answers 0.434; at 50\% the deterministic arms reach
0.21--0.22 while the flag goes dark and the exploration tax leaves the
gated system at 0.125 (Table~\ref{tab:base}).

\section{Pre-registered study on real workloads, at two scales}
\label{sec:real}

\subsection{Design}

Four instruction-tuned models (Qwen2.5-1.5B/7B-Instruct,
Llama-3.1-8B-Instruct, Mistral-7B-Instruct-v0.3) run LongBench tasks
\citep{bai2024longbench} under a full grid: every example is answered by
the full cache, by StreamingLLM-, H2O-, and SnapKV-style deterministic
eviction, by question-aware top-$k$, and by Poisson eviction with the
online certificate (two seeds, stream and aware conditions), at budgets
$\{25\%, 50\%\}$ with 6k-token contexts (four LongBench tasks plus a synthetic needle anchor; Appendix~\ref{app:details})
and $\{12.5\%, 25\%, 50\%\}$ with 16k-token contexts (HotpotQA,
PassageRetrieval-en, MuSiQue, NarrativeQA), roughly 74{,}000
generations in total. Every compressed run logs the online self-signals
of Section~\ref{sec:panel} (Appendix~\ref{app:details}) and, for
Poisson arms, the certificate. Success is token-F1 $\ge 0.5$ for QA and
exact match for retrieval and needle tasks; \emph{eviction-induced}
failure means the full-cache run answers correctly and the compressed
run does not. Four claims with kill conditions (R1--R4) were fixed
before any full shard ran and three scale hypotheses (S1--S3) before
any 16k shard; Appendix~\ref{app:prereg} reproduces the registration
and every verdict (Table~\ref{tab:verdicts}).

\subsection{Where eviction damages answers}
\label{sec:whydied}

Question-aware eviction is nearly free on real tasks, and the damage
regime lives in streaming. At 16k, full-cache accuracy against Poisson
at the 25\% budget is 0.515 versus 0.516 on HotpotQA and 0.820 versus
0.815 on PassageRetrieval (Table~\ref{tab:acc16k}); the eviction-induced
failure rate in the aware condition, pooled over the four 16k tasks, is
6.2\% at the 25\% budget and 10.5\% at the harshest 12.5\% budget.
Neither truncation (S1: 4.0\% at 16k against 5.6\% at 6k on shared
tasks) nor memorization (context-dependent examples only: about one
point; Appendix~\ref{app:details}) manufactures the finding.

\begin{figure}[t]
\centering
\includegraphics[width=\linewidth]{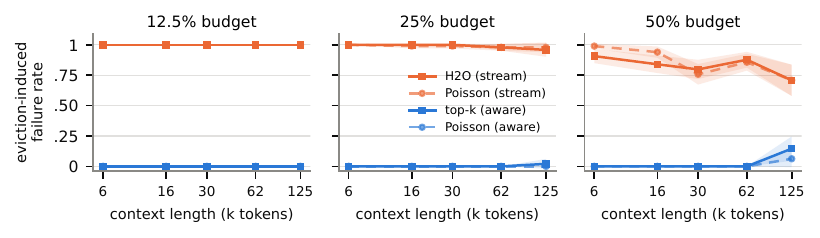}
\caption{Question-aware eviction is nearly free and streaming eviction
is catastrophic at every context length from 6k to 125k tokens.
Eviction-induced failure rate (among examples the full cache answers
correctly) on exact-length synthetic retrieval, three budgets, YaRN
beyond 32k; bands are 95\% binomial intervals.}
\label{fig:damage}
\end{figure}

Streaming eviction, where tokens are deleted before the question
arrives, breaks 40--58\% of answerable examples on the same tasks. On
exact-length synthetic retrieval it destroys 70--100\% at every context
length from 6k to 125k tokens while the aware condition stays near zero
across the same sweep (Figure~\ref{fig:damage}); the full cache answers
97--100\% up to 62k and 80\% at 125k, the price of YaRN extrapolation,
which conditioning on full-correct examples removes. Real-task pooling
at natural lengths near 23k shows the same split, streaming 35--54\%
against aware 2--17\%.

Two further pre-registered outcomes fix the division of labor. The
randomized design costs nothing on real tasks: the aware-condition gap
between deterministic top-$k$ and Poisson is at most 0.7 points at 6k
and 0.3 points at 16k, at every budget including 12.5\% (S3). And mean
output log-probability predicts failure better than every cache-side
signal, certificate included: overall failure 0.73--0.80 against
0.56--0.57 (LongBench-pooled), induced failure 0.782 against 0.778 at
6k and 0.806 against 0.768 at 16k (paired difference $-0.038$, 95\% CI
$[-0.065, -0.010]$), and risk--coverage. The certificate's own pooled
failure-prediction AUC on LongBench is 0.555 at 6k and 0.572 at 16k
(R1, S2; Table~\ref{tab:permodel}) and 0.500 in the aware condition,
because there is almost no eviction damage to see: failures on these
tasks are overwhelmingly inherent. Selective answering should be gated
on output confidence \citep{hendrycks2017, geifman2017}; a trust signal
evaluated without this baseline overstates its case.

\subsection{The silent-failure panel}
\label{sec:panel}

No deterministic self-signal sees its own eviction-induced failures,
and the certificate leads the panel by ten points.
Table~\ref{tab:panelmain} scores every self-signal available to a
deterministic evictor on predicting that evictor's own induced
failures, pooled over both scales (Table~\ref{tab:panel} gives
6k-suite values with intervals). Retained-attention entropy sits at 0.43--0.51 for
StreamingLLM, H2O, and SnapKV, and keep-boundary margin at 0.49--0.50
everywhere. Evicted score mass is the one exception, with partial signal
(0.59--0.73 across arms and suites) that still trails the certificate by
ten points and washes out for overall failure prediction (0.59 pooled),
where task difficulty dominates.

\begin{table}[t]
\centering
\small
\setlength{\tabcolsep}{4pt}
\resizebox{\linewidth}{!}{\begin{tabular}{@{}lccccc@{}}
\toprule
self-signal & StreamingLLM & H2O & SnapKV & aware top-$k$ & Poisson (ours) \\
\midrule
retained entropy & 0.43 [0.40,0.46] & 0.47 [0.44,0.50] & 0.50 [0.47,0.53] & 0.62 [0.57,0.68] & 0.48 [0.45,0.50] \\
evicted score mass & -- & 0.67 [0.65,0.68] & 0.67 [0.65,0.69] & 0.59 [0.55,0.63] & -- \\
keep-boundary margin & -- & 0.49 [0.49,0.50] & 0.50 [0.49,0.50] & 0.50 [0.49,0.50] & -- \\
certificate & -- & -- & -- & -- & \textbf{0.77 [0.76,0.78]} \\
\bottomrule
\end{tabular}}
\caption{Deterministic self-signals sit at chance on their own eviction-induced failures; the certificate, which only a known randomized design provides, leads the panel by ten points. AUC for predicting the arm's own eviction-induced failures, pooled over the 6k and 16k LongBench suites (pre-registered runs), with 95\% cluster-bootstrap intervals resampling examples; evicted score mass is the one deterministic signal with partial information.}
\label{tab:panelmain}
\end{table}

\subsection{What survives: attribution, and scheduling}
\label{sec:attribution}

The certificate loses at predicting whether an answer is wrong and wins
at a question output confidence cannot pose: whose fault was it?
Restricting to failures and asking each signal to separate
eviction-induced from inherent ones, the certificate scores AUC 0.749 at
6k ($n = 7{,}924$ failures, 1{,}133 induced), 0.727 at 16k
($n = 11{,}717$, 1{,}776 induced), and 0.654 at the natural 23k-token
lengths of musique and narrativeqa ($n = 918$, 95 induced); output
log-probability scores 0.542, 0.469, and 0.519
(Figure~\ref{fig:attribution}). The decline with length is
mechanical: a fixed ratio leaves more resident tokens as context grows,
induced failures become rare (14\% of failures at 6k, 10\% at 23k)
and borderline, and 95 of them at 23k leave a wide interval. The asymmetry is structural: low
confidence flags hard examples whether or not the cache is at fault,
while the certificate reads the sampling noise of the retained set and
responds only to the compression channel. This is
Theorem~\ref{thm:identifiability} at task level: the design identifies
the error of the channel the design controls, and nothing else. Triage
against full-cache reruns confirms the verdicts: the tercile the
certificate blames most recovers 2.4$\times$ as often as the tercile it
exonerates (Appendix~\ref{app:details}).

\begin{figure}[t]
\centering
\includegraphics[width=0.78\linewidth]{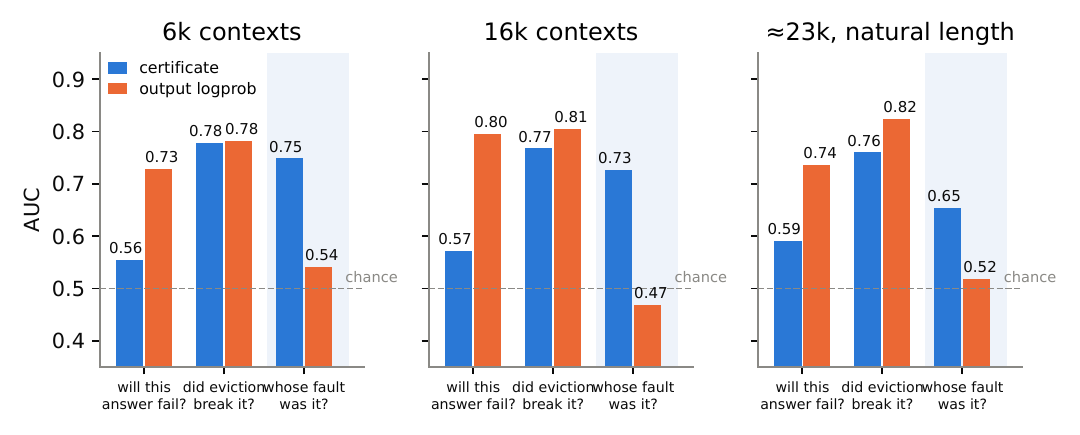}
\caption{The certificate wins attribution at every scale while output
confidence wins prediction. Three questions at 6k, 16k, and natural
lengths near 23k tokens (musique and narrativeqa); the shaded column is
attribution.}
\label{fig:attribution}
\end{figure}

A certificate-free attributor (two independent Poisson draws per
query) reaches AUC 0.620--0.687 on label-assisted proxies, at double
the cost and with no validity statement (Table~\ref{tab:dprime}).

Attribution converts into a scheduling product: under streaming
compression, a quarter of queries may be re-run with the full cache.
Certificate gating captures 36\% (31\%) of the oracle's gain at the
12.5\% (25\%) budget, 1.7--1.8 times the gain of random gating and up
to 2.1 times that of confidence gating (Figure~\ref{fig:recompute},
Table~\ref{tab:dprime}). At the 12.5\% budget (16k) certificate gating
recovers 0.33 against 0.29 for random and 0.28--0.29 for confidence
gating, $+3.7$ points over random per seed (95\% interval
$[+2.3, +5.0]$) and $+4.4$ to $+4.9$ over confidence; the ordering
holds at 25\% ($+1.9$ to $+2.2$ over random) and at 6k ($+2.7$ to
$+3.0$), intervals excluding zero. Confidence gating lands below
random at the harsh budget because it spends re-runs on inherently hard
examples; the certificate spends them on cache-damaged ones, which
recomputation can save. At equal accuracy, the certificate matches
random gating at a quarter of the queries by re-running 10.8--15.6\% of
them; log-probability needs a per-task threshold to do as well
(Table~\ref{tab:iso}).

\section{Deployment: certificate-gated agent memory}
\label{sec:vignettes}

One quantity organizes this section: the resident cache in tokens, not
the budget ratio. The rule $\mathrm{cert} \ge 1$ is the same
everywhere and fires when the resident cache falls to a few hundred
tokens (225--308 at ten turns, 340--460 at thirty, 400--1{,}000 on real
conversations); there the gated system answers where top-$k$ fails.

\paragraph{Agent memory rots silently; the certificate is the smoke
alarm.} Fifty ten-turn dialogues embed one personal fact per user turn
in natural chatter; after prefill (about 1{,}060 tokens) the cache is
compressed to a 25\% budget in the stream condition and four recall
questions of varying age follow (Qwen2.5-7B-Instruct and
Llama-3.1-8B-Instruct, 2{,}000 scored answers; Figure~\ref{fig:agent}).
The full cache answers 98.5\%, so almost every compressed failure is eviction-induced. At the 25\% budget
facts one or two turns old survive at 46--61\% under H2O and Poisson
and facts three or more turns old at 17\% or less under every
importance-based arm. On each arm's own failures the certificate reaches
AUC 0.84 (Qwen) and 0.79 (Llama) against 0.72 and 0.66 for H2O's
retained entropy, and where every entropy operating point needs
labeled answers, the label-free rule $\mathrm{cert} \ge \tau = 1$
flags 99\% of failures at a 49\% false-alarm rate at the 15\% budget
on both models, and 54\% and 28\% at 25\%, where the certificate
median sits at the threshold (0.99 on Qwen, 0.91 on Llama).

\paragraph{Thirty turns: the boundary tracks damage, not the ratio.}
Tripling the dialogue to thirty turns and thirty distinct facts (3,160
tokens; fifty dialogues; six recall questions each) moves the regime
boundary to a smaller budget ratio. At a 25\% budget (about 790 resident
tokens) the median certificate reads 0.76 and the red flag correctly
stays dark while both eviction rules collapse to 0.13--0.23 accuracy:
retrieval-type amnesia, which a per-step certificate does not measure.
The heavy-damage regime returns below 15\%: at 12\% the median
certificate is 1.09 and the red-flag rate 86\%, and at 10\% (about 320
resident tokens) 1.18 and 95\%, where the gated system recovers 0.97
against 0.09 for top-$k$ (AUC 0.82) and, for facts stated 26 to 30
turns earlier, 97\% against 2\%. A ratio rule would put the boundary at
790 resident tokens; it sits at 270 for ten turns and near 400 for
thirty.

\begin{figure}[t]
\centering
\includegraphics[width=0.72\linewidth]{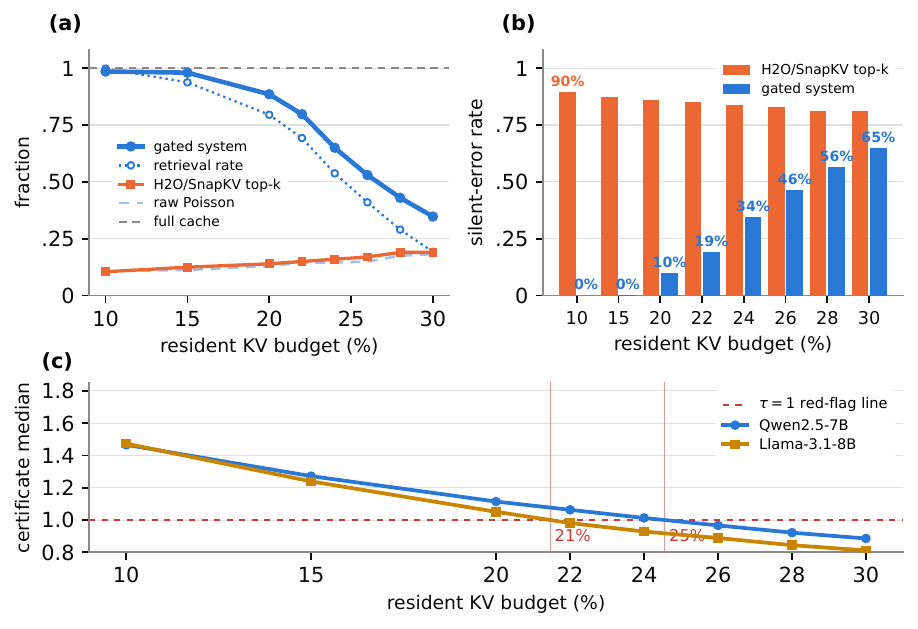}
\caption{A larger budget buys the fraction of answers that need no
retrieval, not accuracy, and the handover happens where the median
certificate crosses $\tau=1$. Merged 10--30\% budget grid (Qwen2.5-7B,
fifty dialogues, stream condition): (a) gated accuracy and retrieval
rate, with raw Poisson and top-$k$; (b) silent-error rate; (c)
certificate medians for both models against $\tau = 1$.}
\label{fig:budget}
\end{figure}

\paragraph{The budget buys trust, not accuracy.}
Sweeping the budget across the 10--30\% grid (same fifty dialogues;
Figure~\ref{fig:budget}), panel (a): gated accuracy falls as the budget
grows, from 98.5\% at a 10\% budget to 35\% at 30\%, while raw Poisson
and top-$k$ stay pinned at 11--19\%. A larger budget lowers reconstruction damage, the
certificate hands trust back to the compressed cache, and the retrieval
rate slides from 100\% to 19\%, so accuracy follows it down. The budget
buys the fraction of answers that need no retrieval, not accuracy.
Panel (b): the raw failure rate barely moves (89\% to 82\%) while
the silent fraction of delivered errors rises from zero to 80\%, as
red-flag coverage of failures falls from 100\% to 20\% and the failure
population's median certificate slides from 1.48 to 0.90. At tight budgets failures are
reconstruction distortion, which the certificate measures; at looser
budgets they are retrieval-type amnesia, invisible to a per-step
certificate by construction, while the ranking signal itself holds
(failure AUC 0.92 to 0.80). Llama-3.1-8B reproduces every pattern
(Appendix~\ref{app:details}), and H2O ships every error silently at
every budget. Panel (c) locates the handover: the median certificate
declines almost linearly with budget and crosses $\tau=1$ at 25\% on
Qwen and 21\% on Llama, where the red-flag rate passes one half: the
exit from the heavy-damage regime is a design quantity, not a tuned
hyperparameter.

\paragraph{Real conversations.}
On LoCoMo \citep{maharana2024locomo}, ten real conversations of
11k--25k tokens, 699 factual questions per model, the full cache scores
F1 0.297 (Qwen2.5-7B), 0.327 (Llama-3.1-8B) and 0.397 (Qwen2.5-32B). The
heavy-damage regime sits at a resident cache of roughly 400--1{,}000
tokens on every model, 2--3\% of the history for Llama and 3--5\% for
Qwen: there top-$k$ keeps 0.06--0.11 and the gated system returns the
full-cache score (0.297 on 7B, 0.316 on Llama, 0.390 on 32B). Above
that point the flag goes dark and the residual failures are inherent or
retrieval-type; at every budget the certificate beats output confidence
at attribution (Table~\ref{tab:locomo}).

\paragraph{Five families and a 32B replication.}
The rule $\mathrm{cert} \ge 1$ is the same on every model; the budget at
which it fires is model-specific because that budget is a measurement,
not a parameter. Across five families the ten-turn crossing sits at 21--29\% of
the budget but within 225--308 resident tokens, and at 10\% every gated
system answers 0.98--0.99 against 0.10--0.12 for top-$k$
(Table~\ref{tab:families}); a different operating point comes from the
label-free recomputation rate (Section~\ref{sec:attribution}), not from
$\tau$. The 32B replication keeps the certificate's ranking within 0.10 of the 7B model (AUC 0.80/0.74/0.70 against 0.87/0.84/0.80 at 20/25/30\%, Table~\ref{tab:scale}) and beats the 7B system at every budget (0.58 against 0.35 at 30\%).

\paragraph{Finer grains.}
The best single unit of the layer--head matrix (layer 22, one sampled head)
predicts failure at AUC 0.967, above the 0.862 of the uniform
aggregate. Ranking evicted 64-token blocks by the variance contribution
of their retained neighbors places the block holding the queried fact
first in 64\% of red-flagged failures (top three in 79\%, of
seventeen), and retrieving 10\% of the context by that ranking
recovers 0.70 accuracy against 0.18 for random blocks (oracle 0.89,
full retrieval 1.00; Table~\ref{tab:finegrain}): the red flag becomes
a targeted retrieval.

\section{Discussion}
\label{sec:discussion}

\paragraph{Guidance by regime.}
Where the query is known before compression, eviction at 25--50\%
budgets is close to free; for answer-level trust, gate on output
confidence. In streaming and agent-memory settings the certificate is the
strongest online attributor we measured and the only one whose validity
is tied to the design; damage monitoring and recomputation scheduling
are where that quantity is worth money.

\paragraph{Limitations and future directions.}
Contexts reach 125k tokens, models 32B parameters and real tasks 23k; beyond these the trends are untested. The certificate concerns per-layer attention error, its task-level meaning is empirical, and the anytime extension has two open ends (Appendix~\ref{app:proofs}). Budget ratios do not transfer across families (Table~\ref{tab:locomo}), the operating point follows $\delta$ and the probed steps (Table~\ref{tab:certabl}), $\epsilon_0$ is unablated and fixed-size designs \citep{hajek1964} are tested only in replay. The Python-level probe adds a fixed 0.2--0.3 s per answer (15--16\% of a 128-token answer) and the unfused offset 12\% per token; a fused kernel is next.

\section*{Reproducibility statement}
All experiments run on single H100 or H200 GPUs (roughly 150 GPU-hours
total). Per-run JSON logs (about 100{,}000 scored generations across
the task suites, controls, and vignettes, plus 12{,}096 replay cells),
the exact prompts, the analysis scripts that regenerate every number
and figure from those logs, and the timestamped pre-registration files
are packaged for release. Models and datasets are public (Qwen2.5,
Llama-3.1, Mistral-7B; LongBench, RULER-style generators).
Appendix~\ref{app:details} gives the harness, signal definitions, task
sampling, scoring, and statistics; Appendix~\ref{app:prereg}
reproduces the pre-registration record.

\bibliographystyle{plainnat}
\bibliography{refs}

@inproceedings{zhang2023h2o,
  title={{H$_2$O}: Heavy-hitter oracle for efficient generative inference of large language models},
  author={Zhang, Zhenyu and Sheng, Ying and Zhou, Tianyi and Chen, Tianlong and Zheng, Lianmin and Cai, Ruisi and Song, Zhao and Tian, Yuandong and R{\'e}, Christopher and Barrett, Clark and Wang, Zhangyang and Chen, Beidi},
  booktitle={Advances in Neural Information Processing Systems 36},
  year={2023},
  note={arXiv:2306.14048}
}

@inproceedings{xiao2024streamingllm,
  title={Efficient streaming language models with attention sinks},
  author={Xiao, Guangxuan and Tian, Yuandong and Chen, Beidi and Han, Song and Lewis, Mike},
  booktitle={International Conference on Learning Representations},
  year={2024},
  note={arXiv:2309.17453}
}

@inproceedings{li2024snapkv,
  title={{SnapKV}: {LLM} knows what you are looking for before generation},
  author={Li, Yuhong and Huang, Yingbing and Yang, Bowen and Venkitesh, Bharat and Locatelli, Acyr and Ye, Hanchen and Cai, Tianle and Lewis, Patrick and Chen, Deming},
  booktitle={Advances in Neural Information Processing Systems 37},
  year={2024},
  note={arXiv:2404.14469}
}

@inproceedings{oren2024tova,
  title={Transformers are multi-state {RNNs}},
  author={Oren, Matanel and Hassid, Michael and Yarden, Nir and Adi, Yossi and Schwartz, Roy},
  booktitle={Proceedings of EMNLP},
  year={2024},
  note={arXiv:2401.06104}
}

@article{cai2024pyramidkv,
  title={{PyramidKV}: Dynamic {KV} cache compression based on pyramidal information funneling},
  author={Cai, Zefan and Zhang, Yichi and Gao, Bofei and Liu, Yuliang and Liu, Tianyu and Lu, Keming and Xiong, Wayne and Dong, Yue and Chang, Baobao and Hu, Junjie and Xiao, Wen},
  journal={arXiv preprint arXiv:2406.02069},
  year={2024}
}

@article{feng2024adakv,
  title={{Ada-KV}: Optimizing {KV} cache eviction by adaptive budget allocation for efficient {LLM} inference},
  author={Feng, Yuan and Lv, Junlin and Cao, Yukun and Xie, Xike and Zhou, S. Kevin},
  journal={arXiv preprint arXiv:2407.11550},
  year={2024}
}

@inproceedings{zhang2024cam,
  title={{CaM}: Cache merging for memory-efficient {LLMs} inference},
  author={Zhang, Yuxin and Du, Yuxuan and Luo, Gen and Zhong, Yunshan and Zhang, Zhenyu and Liu, Shiwei and Ji, Rongrong},
  booktitle={International Conference on Machine Learning},
  year={2024}
}

@article{momentkv2026,
  title={{MomentKV}: Closing the directional gap in {KV} cache eviction for long-context inference},
  author={Li, Yu and Li, Binxu and Lan, Tian},
  journal={arXiv preprint arXiv:2606.01563},
  year={2026}
}

@inproceedings{chen2025magicpig,
  title={{MagicPIG}: {LSH} sampling for efficient {LLM} generation},
  author={Chen, Zhuoming and Sadhukhan, Ranajoy and Ye, Zihao and Zhou, Yang and Zhang, Jianyu and Nolte, Niklas and Tian, Yuandong and Douze, Matthijs and Bottou, L{\'e}on and Jia, Zhihao and Chen, Beidi},
  booktitle={International Conference on Learning Representations},
  year={2025},
  note={arXiv:2410.16179}
}

@article{desai2025vattention,
  title={{vAttention}: Verified sparse attention},
  author={Desai, Aditya and Agrawal, Kumar Krishna and others},
  journal={arXiv preprint arXiv:2510.05688},
  year={2025}
}

@article{nexus2026,
  title={Forget without compromise: Nexus sampling for streaming {KV}-cache eviction under fixed budgets},
  author={Duong, Duc and Le, Hoang Anh Duy and Xie, Jianwen and Shrivastava, Anshumali and Xu, Zhaozhuo},
  journal={arXiv preprint arXiv:2606.23961},
  year={2026}
}

@article{vase2026,
  title={Value-aware stochastic {KV} cache eviction for reasoning models},
  author={Chang, Ting-Yun and Fu, Harvey Yiyun and Fu, Deqing and Yang, Chenghao and Thomason, Jesse and Jia, Robin},
  journal={arXiv preprint arXiv:2606.03928},
  year={2026}
}

@article{fixedcontract2026,
  title={When does value-aware {KV} eviction help? {A} fixed-contract diagnostic for non-monotone cache compression},
  author={Zhang, Ruijie and Liang, Haozhe and Chang, Da and Hu, Li and Kong, Fanqi and Yin, Huaxiao and Li, Yu},
  journal={arXiv preprint arXiv:2605.08234},
  year={2026}
}

@article{goel2025caote,
  title={{CAOTE}: {KV} cache selection for {LLMs} via attention output error-based token eviction},
  author={Goel, Raghavv and Park, Junyoung and Gagrani, Mukul and Jones, Dalton and Morse, Matthew and Langston, Harper and Lee, Mingu and Lott, Chris},
  journal={arXiv preprint arXiv:2504.14051},
  year={2025}
}

@inproceedings{tang2024quest,
  title={Quest: Query-aware sparsity for efficient long-context {LLM} inference},
  author={Tang, Jiaming and Zhao, Yilong and Zhu, Kan and Xiao, Guangxuan and Kasikci, Baris and Han, Song},
  booktitle={International Conference on Machine Learning},
  year={2024},
  note={arXiv:2406.10774}
}

@article{horvitz1952,
  title={A generalization of sampling without replacement from a finite universe},
  author={Horvitz, Daniel G. and Thompson, Donovan J.},
  journal={Journal of the American Statistical Association},
  volume={47},
  number={260},
  pages={663--685},
  year={1952}
}

@article{yates1953,
  title={Selection without replacement from within strata with probability proportional to size},
  author={Yates, Frank and Grundy, P. Michael},
  journal={Journal of the Royal Statistical Society: Series B},
  volume={15},
  number={2},
  pages={253--261},
  year={1953}
}

@article{sen1953,
  title={On the estimate of the variance in sampling with varying probabilities},
  author={Sen, Amode R.},
  journal={Journal of the Indian Society of Agricultural Statistics},
  volume={5},
  pages={119--127},
  year={1953}
}

@article{hajek1964,
  title={Asymptotic theory of rejective sampling with varying probabilities from a finite population},
  author={H{\'a}jek, Jaroslav},
  journal={Annals of Mathematical Statistics},
  volume={35},
  number={4},
  pages={1491--1523},
  year={1964}
}

@book{sarndal1992,
  title={Model Assisted Survey Sampling},
  author={S{\"a}rndal, Carl-Erik and Swensson, Bengt and Wretman, Jan},
  publisher={Springer},
  year={1992}
}

@article{neyman1934,
  title={On the two different aspects of the representative method},
  author={Neyman, Jerzy},
  journal={Journal of the Royal Statistical Society},
  volume={97},
  number={4},
  pages={558--625},
  year={1934}
}

@book{kish1965,
  title={Survey Sampling},
  author={Kish, Leslie},
  publisher={Wiley},
  year={1965}
}

@book{ville1939,
  title={{\'E}tude critique de la notion de collectif},
  author={Ville, Jean},
  publisher={Gauthier-Villars},
  year={1939}
}

@article{howard2021,
  title={Time-uniform, nonparametric, nonasymptotic confidence sequences},
  author={Howard, Steven R. and Ramdas, Aaditya and McAuliffe, Jon and Sekhon, Jasjeet},
  journal={Annals of Statistics},
  volume={49},
  number={2},
  pages={1055--1080},
  year={2021}
}

@article{waudbysmith2024,
  title={Estimating means of bounded random variables by betting},
  author={Waudby-Smith, Ian and Ramdas, Aaditya},
  journal={Journal of the Royal Statistical Society: Series B},
  volume={86},
  number={1},
  pages={1--27},
  year={2024}
}

@article{vovk2021,
  title={E-values: Calibration, combination and applications},
  author={Vovk, Vladimir and Wang, Ruodu},
  journal={Annals of Statistics},
  volume={49},
  number={3},
  pages={1736--1754},
  year={2021}
}

@article{ramdas2023,
  title={Game-theoretic statistics and safe anytime-valid inference},
  author={Ramdas, Aaditya and Gr{\"u}nwald, Peter and Vovk, Vladimir and Shafer, Glenn},
  journal={Statistical Science},
  volume={38},
  number={4},
  pages={576--601},
  year={2023}
}

@inproceedings{hendrycks2017,
  title={A baseline for detecting misclassified and out-of-distribution examples in neural networks},
  author={Hendrycks, Dan and Gimpel, Kevin},
  booktitle={International Conference on Learning Representations},
  year={2017},
  note={arXiv:1610.02136}
}

@inproceedings{geifman2017,
  title={Selective classification for deep neural networks},
  author={Geifman, Yonatan and El-Yaniv, Ran},
  booktitle={Advances in Neural Information Processing Systems 30},
  year={2017}
}

@inproceedings{bai2024longbench,
  title={{LongBench}: A bilingual, multitask benchmark for long context understanding},
  author={Bai, Yushi and Lv, Xin and Zhang, Jiajie and Lyu, Hongchang and Tang, Jiankai and Huang, Zhidian and Du, Zhengxiao and Liu, Xiao and Zeng, Aohan and Hou, Lei and Dong, Yuxiao and Tang, Jie and Li, Juanzi},
  booktitle={Proceedings of ACL},
  year={2024},
  note={arXiv:2308.14508}
}

@inproceedings{hsieh2024ruler,
  title={{RULER}: What's the real context size of your long-context language models?},
  author={Hsieh, Cheng-Ping and Sun, Simeng and Kriman, Samuel and Acharya, Shantanu and Rekesh, Dima and Jia, Fei and Zhang, Yang and Ginsburg, Boris},
  booktitle={First Conference on Language Modeling},
  year={2024},
  note={arXiv:2404.06654}
}

@article{feld1991,
  title={Why your friends have more friends than you do},
  author={Feld, Scott L.},
  journal={American Journal of Sociology},
  volume={96},
  number={6},
  pages={1464--1477},
  year={1991}
}

@article{eom2014,
  title={Generalized friendship paradox in complex networks: The case of scientific collaboration},
  author={Eom, Young-Ho and Jo, Hang-Hyun},
  journal={Scientific Reports},
  volume={4},
  pages={4603},
  year={2014}
}

@article{qwen2025,
  title={Qwen2.5 technical report},
  author={{Qwen Team}},
  journal={arXiv preprint arXiv:2412.15115},
  year={2024}
}

@article{llama2024,
  title={The {Llama} 3 herd of models},
  author={Grattafiori, Aaron and others},
  journal={arXiv preprint arXiv:2407.21783},
  year={2024}
}

@article{jiang2023mistral,
  title={Mistral {7B}},
  author={Jiang, Albert Q. and Sablayrolles, Alexandre and Mensch, Arthur and Bamford, Chris and Chaplot, Devendra Singh and de las Casas, Diego and Bressand, Florian and Lengyel, Gianna and Lample, Guillaume and Saulnier, Lucile and others},
  journal={arXiv preprint arXiv:2310.06825},
  year={2023}
}

@inproceedings{maharana2024locomo,
  title={Evaluating Very Long-Term Conversational Memory of {LLM} Agents},
  author={Maharana, Adyasha and Lee, Dong-Ho and Tulyakov, Sergey and Bansal, Mohit and Barbieri, Francesco and Fang, Yuwei},
  booktitle={Proceedings of ACL},
  year={2024},
  note={arXiv:2402.17753}
}

\clearpage
\appendix

\section{Proofs and constructions}
\label{app:proofs}

\subsection{Theorem~\ref{thm:identifiability}}
Condition on the query $q_t$ and the cache contents; randomness is only
the Poisson indicators $I_i$, $i \in \cT$, with known $\pi_i$.
Write $x_i = a_i (v_i - y_t)/N$ and note $\sum_{i \in \cC} x_i = 0$ by
\eqref{eq:full}, so the linearized error is
$e_t^{\mathrm{lin}} = \sum_{i \in \cT} (I_i/\pi_i - 1)\, x_i$.
Independence across $i$ gives
\[
\Var\!\left(e_t^{\mathrm{lin}}\right)
= \sum_{i \in \cT} \frac{1 - \pi_i}{\pi_i}\, \|x_i\|^2,
\]
the Poisson-sampling (single-sum) case of the Sen--Yates--Grundy form
\citep{sen1953, yates1953}: joint inclusion probabilities factor,
$\pi_{ij} = \pi_i \pi_j$, so the double sum vanishes. The
Horvitz--Thompson estimator of that population total,
\[
\sum_{i \in \cS \cap \cT} \frac{1}{\pi_i} \cdot
\frac{1 - \pi_i}{\pi_i}\, \|x_i\|^2
= \sum_{i \in \cS \cap \cT} \frac{1 - \pi_i}{\pi_i^2}\, \|x_i\|^2,
\]
is unbiased for it \citep{horvitz1952}. $\Vhat_t$ replaces the unknown
$x_i$ by $a_i (v_i - \yhat_t)/\Nhat$. A first-order expansion of the
H\'ajek ratio \citep{sarndal1992} gives
$\yhat_t - y_t = e_t^{\mathrm{lin}} + O_p(B^2/m^2)$ and
$\Nhat/N = 1 + O_p(B/m)$ under Assumption~\ref{as:bounded}, since each
summand contributes at most $B/m$ after normalization and the tail
sample has expected size $m$. Propagating both substitutions through the
quadratic form perturbs the expectation by $O(B^2/m^2)$, which is the
stated remainder. \hfill$\square$

\subsection{The anytime construction}

\begin{remark}[Anytime extension]
\label{rem:anytime}
The per-step radius \eqref{eq:radius} extends toward time-uniform
validity by a standard route: an empirical-Bernstein e-process along
decode steps within a head \citep{waudbysmith2024}, Ville's inequality
\citep{ville1939, howard2021} for the uniform-in-$T$ statement, and
arithmetic averaging of e-values across arbitrarily dependent heads,
layers, and steps, which preserves validity \citep{vovk2021}. Under
this construction, coverage does not depend on when the certificate is
read.
\end{remark}

Fix a head. For each $t$ the summands $\{(I_i/\pi_i - 1)\, x_i\}_{i \in
\cT}$ are bounded by Assumption~\ref{as:bounded} and mean-zero given
the past, so the empirical-Bernstein supermartingale of
\citet{waudbysmith2024} applied to the running sums yields a process
$M_t$ with $\E[M_t] \le 1$ under the null that the realized error stays
within \eqref{eq:radius}; Ville's inequality \citep{ville1939} converts
$M_t$ into the time-uniform statement, and the boundary in
\citep{howard2021} gives the stated radius shape (a variance term plus
a range term, both computable from the retained set). If tail
indicators are redrawn every $T_r$ steps the increments are independent
across blocks; if the same draw is reused, increments within a block
are identical and the product is taken over blocks, which only loosens
the bound. For combination: if $E^{(1)}, \dots, E^{(K)}$ are e-values
for the per-(layer, head) nulls, arbitrarily dependent, then $\bar{E} =
K^{-1}\sum_k E^{(k)}$ satisfies $\E[\bar E] \le 1$ under the
intersection null, so thresholding $\bar{E}$ at $1/\delta$ is valid
\citep{vovk2021}; the linearization remainder of
Theorem~\ref{thm:identifiability} adds the $O(B^2/m^2)$ slack. The two
open ends are the reduction from the vector-valued linearized error to
a scalar supermartingale, whose constants are not settled, and the
reuse of one tail draw across steps, which the block structure handles
conservatively. \hfill$\square$

\subsection{Two remarks}

\begin{remark}[One law, two architectures]
The bias of the uncorrected retained softmax is, to first order,
$\mathrm{Cov}_p(a, v)/\E_p[a]$ under the retention distribution: a
size-biased covariance term. The same algebra drives degree bias in
message-passing graph networks, where the friendship paradox
\citep{feld1991} makes high-degree neighbors over-represented, and the
Eom--Jo sign criterion \citep{eom2014} predicts opposite intervention
directions in the two systems: graph hubs are over-counted and need
down-weighting, while attention sinks drain value mass and need
retention. A companion manuscript develops the graph side; the law is
narrative context here and carries no load in the proofs.
\end{remark}

\begin{remark}[Where existing methods sit]
\label{rem:methods}
Every eviction scheme is a triple (design, estimator, variance
handling). H2O-style top-$k$ is a certainty-only design with a plug-in
estimator and no variance layer; MagicPIG is sampling with
self-normalized importance-sampling correction and no variance layer;
Nexus and VASE are sampling designs without correction; CaM, MomentKV,
and related merging methods are deterministic designs with a stratified
ratio compensation of the point estimate. The certificate layer of this
paper is orthogonal and can be attached to any known-probability
design. The stratified-ratio view suggests that merging error should
scale with the within-stratum dispersion of evicted values rather than
with evicted attention mass. We tested this in the replay framework
with a CaM-style merge, each evicted value folded into the retained
token with the nearest key, weighted by attention mass: the merging
error tracks evicted mass (Spearman 0.70--0.92 across four models and
three budgets) and not dispersion ($-0.07$ to $0.10$), because the merged
representative inherits the retained key's score, so the score mismatch
dominates the value dispersion. Merging removes under 8\% of top-$k$'s
attention error, the H\'ajek-corrected Poisson design 15--50\% at the
same budgets (Table~\ref{tab:replayv2}), and neither merge produces an
error estimate.
\end{remark}

\section{Experimental details}
\label{app:details}

\paragraph{Harness.}
All suites share one offline-replay harness. A prompt is prefilled once;
importance scores are computed from accumulated attention (H2O-style,
strided prefill queries), an observation window (SnapKV-style, last 64
prefill positions), or question-position queries (aware); each arm then
receives its own compressed \texttt{DynamicCache} and generates greedily
with per-step signal instrumentation. Protected positions (4 sinks, 32
recent) are never evicted. Poisson arms draw independent Bernoulli
retention from \eqref{eq:pi} with the mass of the certainty layer chosen
by the same budget accounting as the deterministic arms. The H\'ajek
offset $+\log(1/\pi_i)$ enters the model as an additive attention bias
on the retained keys. The pre-registered runs and the synthetic suites
were run with the uncorrected retained softmax and the certificate
computed as the H\'ajek radius. The Poisson arms of both LongBench
suites were then re-run with the offset on (four models, two budgets,
two seeds, stream and aware; 6{,}400 generations per suite). At 16k:
accuracy 0.239 and 0.340 at the 12.5\% and 25\% budgets against 0.243
and 0.344 uncorrected, certificate failure-prediction AUC 0.51--0.53
against 0.48--0.49 with log-probability at 0.75--0.76 (0.75--0.78
uncorrected), attribution AUC 0.65 against 0.65--0.67 (log-probability
0.38--0.39 against 0.40--0.42), and certificate gating at 32--33\% of
the oracle gain, 1.5--1.8 times random and 1.7--1.9 times confidence
gating, against 31--36\%, 1.7--1.8 and 1.8--2.1 uncorrected. At 6k
(four LongBench tasks, 25\% and 50\% budgets): accuracy 0.291 and 0.367
against 0.290 and 0.368, failure-prediction AUC 0.55--0.56 against
0.53 with log-probability at 0.71--0.73 in both, attribution AUC
0.65--0.74 against 0.64--0.72 (log-probability 0.49--0.50 against
0.49--0.52), and certificate gating $+3.3$ and $+2.9$ points over random
against $+2.9$ and $+3.0$. The offset lowers the certificate's scale by
4--8\% (median 0.72 against 0.75 at 6k, 0.85 against 0.92 at 16k), so
only quantities that read the scale move; every ranking-based
conclusion of Section~\ref{sec:real} holds in the certified
configuration, and a paired re-run of the synthetic suite (two instruct
models, four tasks, 28 cells) leaves Poisson accuracy at 0.356 against
0.356 with every cell within 0.03.
The agent-memory experiments of Section~\ref{sec:vignettes}, whose
red-flag rule depends on the certificate's scale, were run with the
offset on. The certificate is \eqref{eq:radius} at
$\delta = 0.1$ computed on every fourth head of every layer during the
first six decode steps, averaged, then maximized over steps. The
probability floor in \eqref{eq:pi} is $\varepsilon = 10^{-6}$ and the
normalizer constant in \eqref{eq:radius} is $\epsilon_0 = 10^{-9}$.
Poisson sampling leaves the retained-set size random. Fixed-size
systematic sampling at the same inclusion probabilities
\citep{hajek1964} keeps the budget exact; in the replay framework it
keeps the certificate's coverage (0.980--0.999 against 0.981--0.997 for
Poisson on four models) and lowers the median error by 8--16\%
(Table~\ref{tab:replayv2}), at the price of joint inclusion
probabilities that do not factor, so the Sen--Yates--Grundy form is
then an approximation whose coverage is measured, not proved.

\paragraph{Signals.}
Retained-attention entropy: decode-step attention entropy over the
retained set, normalized by $\log(\text{retained size})$, averaged as
above. Evicted score mass: one minus the retained share of the arm's own
importance mass at eviction time. Keep-boundary margin: minimum retained
unprotected score minus maximum evicted score, in units of the score
standard deviation. Output log-probability: mean generated-token log
probability. All signals are online: computable at serving time by that
arm.

\paragraph{Tasks and scoring.}
6k suite: LongBench HotpotQA, 2WikiMQA, MultiFieldQA-en, and
PassageRetrieval-en (100 examples each, deterministic shuffle), plus a
three-passcode needle anchor; contexts middle-truncated to 6{,}000
tokens. 16k suite: HotpotQA and PassageRetrieval-en under the same
sampling, plus MuSiQue and NarrativeQA; contexts to 16{,}000 tokens;
budgets $\{12.5\%, 25\%, 50\%\}$. QA scored by LongBench token-F1
against all references (success F1 $\ge 0.5$; the 0.3 threshold moves no
verdict); retrieval and needle by exact match. Chat prompts split the
template around the context so that stream arms compress before any
question token exists.

\paragraph{No-context control.}
Each instruct model answers each task's questions with the context
removed (same examples, same prompt scaffolding and scoring; 2{,}400
generations). No-context accuracy: 0.22--0.32 on HotpotQA and 2WikiMQA,
0.02--0.13 on MultiFieldQA-en, 0.07--0.13 on MuSiQue, 0.03--0.06 on
NarrativeQA, 0.00--0.05 on PassageRetrieval-en. Of full-cache-correct
examples, 75\% (6k) and 84\% (16k) are context-dependent. Restricted to
those examples, aware-condition induced-failure rates are 11.5/6.9/2.3\%
for 16k Poisson at budgets 12.5/25/50\% (unrestricted:
10.5/6.2/2.6\%) and 9.5/4.4\% for 6k Poisson at 25/50\% (8.5/4.0\%);
streaming H2O rises from 43.8\% to 47.3\% at 16k/25\%. The regime
picture of Figure~\ref{fig:damage} is unchanged.

\paragraph{Agent-memory system and triage.}
The vignette generates fifty ten-turn dialogues from twelve fact
templates (flight numbers, allergies, invoice numbers, door codes,
contact names) embedded in shuffled natural chatter, one fact per user
turn with a brief assistant acknowledgment; histories average 1{,}060
tokens. Compression uses the shared harness in the stream condition
(H2O-style prefill importance for the top-$k$ arm, observation-window
importance for the Poisson design), 25\% budget, protected sinks and
recency as elsewhere, two Poisson seeds; answers are scored by
substring match of the fact value. The failures differ in kind: asked
for the meeting room stated eight turns earlier, Qwen answers ``I don't
have that information on file,'' and Llama fabricates one (``the review
meeting is in room 519''; the user said B549); recency-window failures
are silent by construction. The budget sweep of Figure~\ref{fig:budget}
reuses the same dialogue generator, seeds, arms, and scoring at budgets
$\{20, 22, 24, 26, 28, 30\}\%$, merged for the operating picture with
the 10\% and 15\% cells of a six-budget grid run under identical seeds
(the shared 20\% cell reproduces bit-identically across the two runs);
the gated system is composed exactly from per-run records (greedy
decoding makes the full-cache re-answer identical to the full arm) and
the red-flag rule is fixed at $\tau = 1$ throughout; at budgets of 15\% and below the near-zero silent rate is the gated system's distinctive property. On Llama-3.1-8B the sweep reproduces every pattern: failure rate flat from 89\% to 80\%, red-flag coverage of failures 100\% to 8\%, failure-median certificate 1.48 to 0.83. The thirty-turn variant
extends the fact pool to 32 distinct
kinds (one per turn, no repeats), asks six recall questions per dialogue, and runs budgets $\{8, 10, 12, 15, 25\}\%$ with the same arms and scoring; the live transcripts (five seeds) are recorded at the 10\% and 15\% budgets, and Figure~\ref{fig:hero} shows the 15\% run of one of them. The scale replication runs the identical generator, seeds,
arms, and scoring on Qwen2.5-32B-Instruct at budgets $\{20, 25, 30\}\%$
(bf16, one H200). The fine-grained probe re-runs the 25\% budget on the
7B model, recording per-(layer, head) certificates over the first six
decode steps and the per-retained-token variance contribution; on each
red flag the question is re-answered four ways: full retrieval, evicted
tokens of 64-token blocks ranked by retained-neighbor contribution up
to 10\% of the context, random blocks at the same budget, and an oracle
that retrieves the block containing the queried fact value. The
3{,}200 layer--head matrices ship with the run records.
Figure~\ref{fig:agent}b sweeps each signal (certificate for Poisson,
retained entropy for H2O) over its own range per model, flags answers
above threshold, and averages the two models' curves at fixed
false-alarm rate; the marked operating point applies the fixed rule
$\mathrm{cert} \ge \tau = 1$ pooled over models, with no labels or
tuning involved. The triage analysis splits Poisson stream failures
into certificate terciles within each (model, task, budget) group, so
certificate scales never mix, and measures the fraction whose
full-cache run is correct. Among the third of stream failures the
certificate blames most, 32.4\% recover when rerun with the full cache;
among the third it exonerates, 13.7\% (6k suite, $n = 4{,}293$
failures, a 2.4$\times$ enrichment), while ranking the same failures
by output confidence is uninformative (22.8\% against 22.9\%); at 16k
the enrichment is 1.35$\times$ for the certificate against
1.11$\times$ for confidence.

\begin{figure}[t]
\centering
\includegraphics[width=0.72\linewidth]{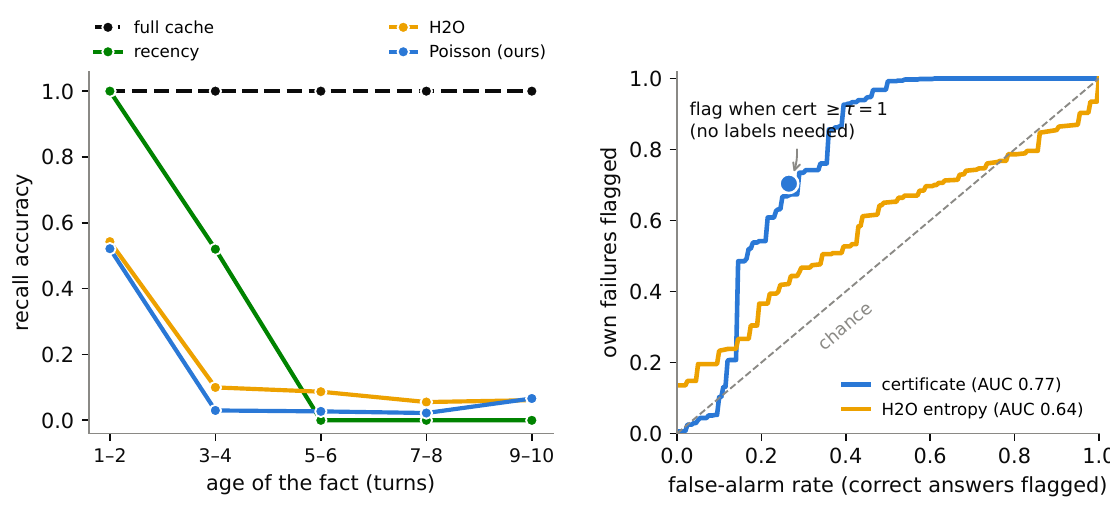}
\caption{Under streaming compression to a 25\% budget, facts three or
more turns old are lost by every importance-based rule, and only the
certificate flags the loss without labels. Two instruct models, 2{,}000
scored answers. (a) Recall by age of the fact. (b) Flagging one's own
failures, model-averaged; the marked point is the label-free rule
$\mathrm{cert} \ge \tau = 1$.}
\label{fig:agent}
\end{figure}

\paragraph{Real conversations.}
LoCoMo \citep{maharana2024locomo} provides ten two-person conversations
of 19--35 dated sessions with 1{,}986 questions in five categories; we
use the factual categories (single-hop, multi-hop, temporal; 699
questions). The transcript, with session headers and photo captions, is
ingested as chat history under a system prompt that asks for value-only
answers; each question is asked from the compressed views (recency, H2O,
SnapKV-style top-$k$, Poisson with certificate) at budgets
$\{5, 10, 15, 25\}\%$ and from the full archive. Success is the
reference contained in the answer or at least 75\% of its tokens
present (order-free, so date formats agree); the tables report mean
token F1. The gated system is composed from the records as elsewhere.

\paragraph{Statistics.}
AUCs are Mann--Whitney; intervals are 95\% cluster bootstrap resampling
examples (500 draws), so repeated measurements of one example never
inflate significance. Paired signal comparisons bootstrap the AUC
difference on shared examples. Pooled-across-model AUCs mix certificate
scales across models and are therefore conservative; per-model values
appear in Table~\ref{tab:permodel}.

\section{Pre-registration record}
\label{app:prereg}

The registration file was written on 2026-07-22 before any full 6k shard
completed, and its scale addendum before any 16k shard ran; both are
timestamped in the released package alongside the Slurm submission
records. Table~\ref{tab:verdicts} lists all seven claims with outcomes.

\begin{table}[h]
\centering
\small
\begin{tabular}{@{}llll@{}}
\toprule
& Claim (abbreviated) & Kill condition & Verdict \\
\midrule
R1 & Certificate predicts own task failure on & pooled AUC $< 0.6$ &
killed, both scales \\
& real tasks (deployable trust signal) & & (0.555 at 6k, 0.572 at 16k) \\
R2 & No deterministic self-signal sees its & any signal $\ge$ cert & survived
(max 0.73 vs.\ \\
& own eviction-induced failure & lower CI & cert 0.855; entropy at chance) \\
R3 & Certificate $\ge$ output logprob on & logprob strictly & not met (tie at
6k; \\
& induced-failure prediction & dominates & logprob wins at 16k) \\
R4 & Certificate-gated $25\%\!\to\!50\%$ escalation & cert $\le$ random &
killed (no headroom \\
& beats random at matched budget & & aware; $\approx$ random stream) \\
\midrule
S1 & 6k ``aware is free'' is a truncation & rate rise $< 5$pp & no rise:
free at 16k \\
& artifact; damage rises at 16k & confirms & (4.0\% at 25\%) \\
S2 & Certificate AUC recovers ($\ge 0.6$) at 16k & $< 0.6$ = final &
final kill (0.572) \\
S3 & Exploration tax reappears at 12.5\% & tax $< 3$pp confirms & no tax
($+0.3$pp) \\
\bottomrule
\end{tabular}
\caption{The seven pre-registered claims and their outcomes. The
prediction claims (R1, R3, R4, S2) failed and the regime and
attribution claims (R2, S1, S3) held, which is the division of labor
reported in Section~\ref{sec:real}.}
\label{tab:verdicts}
\end{table}

R1 (certificate validity transfers): pooled LongBench
certificate-to-own-failure AUC above 0.5 with CI excluding 0.5 in at
least 3 of 4 models; kill if pooled AUC $< 0.6$. Outcome: per-model
0.552--0.581 (6k), all above 0.5; pooled 0.555, kill triggered; 16k
pooled 0.572, kill confirmed final (S2).

R2 (silent failure is signal-general): every deterministic
retained-set signal at AUC $< 0.6$ or CI overlapping 0.5; kill if any
reaches the certificate's lower CI. Outcome: overall-failure panel
maxima 0.59 (evicted mass); induced-failure panel maxima 0.73 (6k
SnapKV evicted mass; 0.67 two-scale pooled), against certificate 0.855
(6k, lower CI 0.84); kill not triggered under either reading; the
partial visibility of evicted mass is reported.

R3 (certificate at least matches output confidence on induced
failures): kill if confidence strictly dominates. Outcome: 6k tie
($-0.004$, CI $[-0.031, +0.025]$); 16k confidence wins ($-0.038$, CI
$[-0.065, -0.010]$). Not met; prediction conceded to confidence.

R4 (certificate-gated escalation): kill if at or below random.
Outcome: killed; aware has no headroom (fixed-budget accuracies 0.538
against 0.539), stream certificate-gating matches random, because a
50\% stream cache is still broken. The recomputation analysis of
Section~\ref{sec:attribution} was designed after this autopsy and is
post hoc.

S1--S3 as in Table~\ref{tab:verdicts}: induced-failure rate at
16k/25\% is 4.0\% against 5.6\% at 6k/25\% (no rise; 12.5\% reaches
7.3\%); certificate AUC 0.572 ($< 0.6$, final); aware tax $+0.3$pp at
12.5\%.

Like the recomputation analysis, the per-seed replication, the exact
random-gating expectations, the two-draw disagreement baseline, and the
deployment experiments of Section~\ref{sec:vignettes} were added in
revision and are post hoc; the pre-registered claims and kill
conditions are exactly those listed above.

\section{Additional tables and figures}
\label{app:tables}

\begin{table}[h]
\centering
\footnotesize
\setlength{\tabcolsep}{3pt}
\begin{tabular}{@{}llcccccc@{}}
\toprule
model & attention & $L$ & coverage & Spearman & top-$k$ & Poisson+H\'ajek & Poisson, no offset \\
\midrule
SmolLM2-1.7B & GQA & 24 & 0.984 & 0.978 & 0.034 & \textbf{0.025} & 0.043 \\
Llama-3.2-3B & GQA & 28 & 0.995 & 0.976 & 0.074 & \textbf{0.039} & 0.092 \\
Qwen2.5-7B & GQA & 28 & 0.981 & 0.973 & 0.049 & \textbf{0.033} & 0.063 \\
Llama-3.1-8B & GQA & 32 & 0.995 & 0.984 & 0.054 & \textbf{0.032} & 0.068 \\
Mistral-7B-v0.3 & GQA & 32 & 0.997 & 0.984 & 0.068 & \textbf{0.039} & 0.085 \\
Falcon3-7B & GQA & 28 & 0.996 & 0.988 & 0.046 & \textbf{0.027} & 0.057 \\
Qwen3-8B & QK-norm & 36 & 0.995 & 0.980 & 0.059 & \textbf{0.040} & 0.074 \\
OLMo-2-7B & QK-norm & 32 & 0.997 & 0.975 & 0.179 & \textbf{0.081} & 0.210 \\
Gemma-2-9B & soft-cap & 42 & 0.993 & 0.983 & 0.033 & \textbf{0.018} & 0.042 \\
phi-4 & fused qkv & 40 & 0.996 & 0.984 & 0.061 & \textbf{0.043} & 0.076 \\
Qwen2.5-14B & GQA & 48 & 0.995 & 0.981 & 0.062 & \textbf{0.043} & 0.080 \\
Qwen1.5-MoE-A2.7B & GQA & 24 & 0.993 & 0.982 & 0.041 & \textbf{0.028} & 0.051 \\
\bottomrule
\end{tabular}
\caption{The certificate is architecture-agnostic: on twelve models spanning GQA, per-head and full-width QK-norm, fused qkv projections, logit soft-capping and mixture-of-experts feed-forward layers, coverage of the realized attention error at the 25\% budget is 0.981--0.997 and the Spearman correlation between certificate and error is 0.973--0.988. Median relative attention error at the 25\% budget: Poisson with the H\'ajek offset is below top-$k$ on every model, and Poisson without the offset is above top-$k$ on every model. Replay protocol of Section~\ref{sec:replay}, 10{,}368--20{,}736 cells per model.}
\label{tab:zoo}
\end{table}

\begin{table}[h]
\centering
\scriptsize
\setlength{\tabcolsep}{3pt}
\resizebox{\linewidth}{!}{\begin{tabular}{@{}llccccccc@{}}
\toprule
model & budget & resident & top-$k$ & Poisson & gated & red-flag & silent & attribution AUC \\
 & & tokens & & & ($\tau=1$) & rate & & cert / logprob \\
\midrule
Qwen2.5-7B-Instruct (10 conv.) & full & -- & \multicolumn{2}{c}{0.297} & & & & \\
 & 2\% & 424 & 0.092 & 0.068 & \textbf{0.297} & 1.00 & 0.00 & 0.58 / 0.48 \\
 & 3\% & 630 & 0.090 & 0.093 & \textbf{0.297} & 1.00 & 0.00 & 0.54 / 0.52 \\
 & 5\% & 1,053 & 0.110 & 0.089 & \textbf{0.252} & 0.79 & 0.20 & 0.54 / 0.50 \\
 & 10\% & 2,110 & 0.130 & 0.140 & \textbf{0.216} & 0.48 & 0.48 & 0.55 / 0.51 \\
 & 15\% & 3,164 & 0.153 & 0.155 & \textbf{0.179} & 0.14 & 0.80 & 0.53 / 0.48 \\
 & 25\% & 5,308 & 0.192 & 0.206 & \textbf{0.206} & 0.00 & 0.92 & 0.58 / 0.44 \\
\midrule
Qwen2.5-32B-Instruct (10 conv.) & full & -- & \multicolumn{2}{c}{0.397} & & & & \\
 & 5\% & 1,051 & 0.090 & 0.131 & \textbf{0.390} & 0.96 & 0.04 & 0.55 / 0.48 \\
 & 10\% & 2,110 & 0.157 & 0.181 & \textbf{0.289} & 0.52 & 0.44 & 0.55 / 0.45 \\
 & 15\% & 3,163 & 0.194 & 0.209 & \textbf{0.245} & 0.18 & 0.76 & 0.54 / 0.46 \\
 & 25\% & 5,308 & 0.265 & 0.255 & \textbf{0.255} & 0.00 & 0.89 & 0.60 / 0.49 \\
\midrule
Llama-3.1-8B-Instruct (10 conv.) & full & -- & \multicolumn{2}{c}{0.327} & & & & \\
 & 2\% & 427 & 0.063 & 0.067 & \textbf{0.316} & 0.97 & 0.03 & 0.55 / 0.52 \\
 & 3\% & 630 & 0.072 & 0.088 & \textbf{0.255} & 0.68 & 0.30 & 0.54 / 0.46 \\
 & 5\% & 1,043 & 0.086 & 0.098 & \textbf{0.185} & 0.33 & 0.64 & 0.55 / 0.48 \\
 & 10\% & 2,100 & 0.121 & 0.159 & \textbf{0.167} & 0.02 & 0.92 & 0.55 / 0.47 \\
 & 15\% & 3,173 & 0.157 & 0.183 & \textbf{0.183} & 0.00 & 0.93 & 0.57 / 0.47 \\
 & 25\% & 5,287 & 0.194 & 0.227 & \textbf{0.227} & 0.00 & 0.91 & 0.59 / 0.55 \\
\bottomrule
\end{tabular}}
\caption{Real long-term conversations (LoCoMo, ten conversations of 11k--25k tokens, factual questions, stream condition, mean token F1). The heavy-damage regime sits at a resident cache of roughly 400--1{,}000 tokens on every model (2--3\% of the history for Llama, 3--5\% for Qwen); there the certificate-gated system returns the full-cache score while top-$k$ keeps a third of it, and the 32B model raises the ceiling without changing the certificate's scale (median 1.23 and 1.25 at 5\% for the two Qwen sizes). Above that point the flag goes dark and the residual failures are inherent or retrieval-type. The certificate separates eviction-induced from inherent failures better than output log-probability at every budget on every model.}
\label{tab:locomo}
\end{table}

\begin{table}[h]
\centering
\small
\setlength{\tabcolsep}{4pt}
\resizebox{\linewidth}{!}{\begin{tabular}{@{}lcccccc@{}}
\toprule
model & \multicolumn{2}{c}{$\tau=1$ crossing (ten turns)} & gated / top-$k$ & gated / top-$k$ & AUC & crossing \\
 & budget & resident tokens & at 10\% & at 15\% & at 15\% & (thirty turns) \\
\midrule
Qwen2.5-7B-Instruct & 25\% & 264 & 0.98 / 0.10 & 0.98 / 0.12 & 0.92 & 14\% (457 tokens) \\
Llama-3.1-8B-Instruct & 21\% & 226 & 0.98 / 0.12 & 0.98 / 0.15 & 0.84 & -- \\
Qwen3-8B & 21\% & 225 & 0.98 / 0.12 & 0.87 / 0.14 & 0.77 & 11\% (345 tokens) \\
phi-4 & 22\% & 225 & 0.99 / 0.11 & 0.95 / 0.14 & 0.73 & 11\% (337 tokens) \\
Qwen2.5-32B-Instruct & 29\% & 308 & -- / -- & -- / -- & -- & -- \\
\bottomrule
\end{tabular}}
\caption{The red-flag boundary across model families in the certified configuration (ten-turn vignette, fifty dialogues, stream condition; thirty-turn where run). The budget at which the median certificate crosses $\tau = 1$ differs by family, but the resident cache at the crossing stays within a few hundred tokens; inside the heavy-damage regime (10\%) the gated system answers 0.9--1.0 against 0.1--0.2 for top-$k$ on every model.}
\label{tab:families}
\end{table}

\begin{table}[h]
\centering
\small
\setlength{\tabcolsep}{4pt}
\resizebox{\linewidth}{!}{\begin{tabular}{@{}lcccccc@{}}
\toprule
& \multicolumn{3}{c}{25\% budget} & \multicolumn{3}{c}{15\% budget} \\
setting & AUC & red-flag rate & gated acc.\ & AUC & red-flag rate & gated acc.\ \\
\midrule
deployed ($\delta=0.1$, 6 steps, 4 heads/layer, mean, max over steps) & 0.802 & 0.47 & 0.575 & 0.877 & 0.95 & 1.000 \\
$\delta = 0.05$ & 0.801 & 0.89 & 0.980 & 0.877 & 0.99 & 1.000 \\
$\delta = 0.2$ & 0.803 & 0.03 & 0.165 & 0.878 & 0.48 & 0.575 \\
1 decode step & 0.815 & 0.01 & 0.150 & 0.872 & 0.76 & 0.840 \\
3 decode steps & 0.816 & 0.34 & 0.465 & 0.906 & 0.93 & 0.995 \\
12 decode steps & 0.788 & 0.64 & 0.735 & 0.893 & 0.95 & 1.000 \\
1 head per layer & 0.773 & 0.45 & 0.555 & 0.856 & 0.95 & 0.995 \\
all 28 heads & 0.782 & 0.47 & 0.585 & 0.873 & 0.94 & 0.990 \\
median over units & 0.810 & 0.20 & 0.325 & 0.881 & 0.87 & 0.960 \\
max over units & 0.403 & 1.00 & 1.000 & 0.465 & 1.00 & 1.000 \\
mean over steps & 0.797 & 0.10 & 0.235 & 0.893 & 0.89 & 0.975 \\
\bottomrule
\end{tabular}}
\caption{Implementation choices of the certificate, recomputed offline from per-unit traces of the ten-turn vignette (Qwen2.5-7B-Instruct, 200 answers per budget, offset on). The failure-ranking AUC is stable across confidence level, probed steps, head count and aggregation rule (0.77--0.82 at 25\%, 0.85--0.94 at 15\%); the operating point of the label-free rule $\mathrm{cert} \ge 1$ moves with $\delta$ and the number of probed steps, which scale the radius, and max-over-units saturates.}
\label{tab:certabl}
\end{table}

\begin{table}[h]
\centering
\small
\setlength{\tabcolsep}{4pt}
\resizebox{\linewidth}{!}{\begin{tabular}{@{}llccccc|c@{}}
\toprule
budget, thresholds & gate & 5\% & 10\% & 15\% & 25\% & 50\% & rate to match random at 25\% \\
\midrule
12.5\%, one global threshold & certificate & \textbf{0.259} & \textbf{0.276} & \textbf{0.290} & \textbf{0.320} & \textbf{0.382} & 15.7\% \\
 & log-probability & 0.247 & 0.255 & 0.265 & 0.286 & 0.332 & 30.4\% \\
 & random & 0.250 & 0.260 & 0.271 & 0.292 & 0.345 & -- \\
\midrule
12.5\%, per-task thresholds & certificate & 0.256 & \textbf{0.269} & 0.283 & 0.308 & 0.364 & 19.5\% \\
 & log-probability & \textbf{0.257} & \textbf{0.269} & \textbf{0.284} & \textbf{0.313} & \textbf{0.374} & 17.9\% \\
 & random & 0.250 & 0.260 & 0.271 & 0.292 & 0.345 & -- \\
\midrule
25\%, one global threshold & certificate & \textbf{0.355} & \textbf{0.367} & \textbf{0.372} & \textbf{0.389} & \textbf{0.417} & 11.0\% \\
 & log-probability & 0.344 & 0.348 & 0.354 & 0.366 & 0.391 & 27.5\% \\
 & random & 0.346 & 0.351 & 0.357 & 0.368 & 0.395 & -- \\
\midrule
25\%, per-task thresholds & certificate & \textbf{0.354} & \textbf{0.363} & \textbf{0.372} & 0.385 & 0.415 & 12.8\% \\
 & log-probability & 0.352 & 0.362 & \textbf{0.372} & \textbf{0.391} & \textbf{0.422} & 13.0\% \\
 & random & 0.346 & 0.351 & 0.357 & 0.368 & 0.395 & -- \\
\bottomrule
\end{tabular}}
\caption{Iso-accuracy view of recomputation scheduling on the 16k LongBench suite (stream Poisson arms, four models, four tasks, two seeds). With one threshold serving the mixed workload, the certificate reaches the accuracy of random gating at a quarter of the queries by re-running 9.7--14.0\% of them, and log-probability gating needs more re-runs than random; with a separate threshold per model and task, log-probability gating catches up. The certificate is a relative error bound whose scale transfers across tasks; a confidence score's scale does not, so it needs per-task calibration to schedule well.}
\label{tab:iso}
\end{table}

\begin{table}[h]
\centering
\small
\setlength{\tabcolsep}{3pt}
\resizebox{\linewidth}{!}{\begin{tabular}{@{}lc|ccc|cc|cc@{}}
\toprule
 & & \multicolumn{3}{c|}{median relative error} & \multicolumn{2}{c|}{Spearman(merge error, $\cdot$)} & \multicolumn{2}{c}{coverage} \\
model & budget & top-$k$ & merge & Poisson & dispersion & evicted mass & Poisson & fixed-size \\
\midrule
Llama-3.1-8B-Instruct & 50\% & 0.022 & 0.021 & 0.011 & 0.04 & 0.81 & 0.995 & 0.994 \\
 & 25\% & 0.065 & 0.063 & 0.037 & 0.08 & 0.75 & 0.994 & 0.995 \\
 & 12.5\% & 0.126 & 0.122 & 0.091 & 0.04 & 0.70 & 0.993 & 0.994 \\
\midrule
Qwen2.5-7B-Instruct & 50\% & 0.014 & 0.013 & 0.008 & $-0.01$ & 0.92 & 0.987 & 0.989 \\
 & 25\% & 0.050 & 0.047 & 0.035 & 0.05 & 0.88 & 0.984 & 0.986 \\
 & 12.5\% & 0.113 & 0.104 & 0.089 & 0.04 & 0.83 & 0.981 & 0.980 \\
\midrule
Qwen3-8B & 50\% & 0.017 & 0.017 & 0.010 & $-0.07$ & 0.87 & 0.993 & 0.996 \\
 & 25\% & 0.059 & 0.056 & 0.041 & $-0.06$ & 0.82 & 0.994 & 0.996 \\
 & 12.5\% & 0.124 & 0.120 & 0.103 & $-0.06$ & 0.76 & 0.991 & 0.993 \\
\midrule
phi-4 & 50\% & 0.019 & 0.018 & 0.011 & 0.06 & 0.88 & 0.997 & 0.999 \\
 & 25\% & 0.061 & 0.060 & 0.043 & 0.10 & 0.82 & 0.996 & 0.997 \\
 & 12.5\% & 0.125 & 0.125 & 0.106 & 0.07 & 0.76 & 0.995 & 0.996 \\
\bottomrule
\end{tabular}}
\caption{Merging and fixed-size designs in the replay framework (six prompts, 48 probes per layer, every second layer, bf16). Merging error tracks the within-stratum dispersion of evicted values more closely than the evicted attention mass, as Remark~\ref{rem:methods} predicts, and the Sen--Yates--Grundy certificate keeps its coverage under fixed-size systematic sampling at matched inclusion probabilities.}
\label{tab:replayv2}
\end{table}

\begin{table}[h]
\centering
\small
\setlength{\tabcolsep}{3pt}
\resizebox{\linewidth}{!}{\begin{tabular}{@{}llccccccc@{}}
\toprule
model & task & budget & full & SnapKV & PyramidKV & CaM-merge & Poisson & gated ($\tau=1$) \\
\midrule
Llama-3.1-8B-Instruct & needle & 25\% & 1.00 & 0.36 & 0.36 & \textbf{0.41} & 0.11 & 0.16 \\
 & needle & 50\% & 1.00 & \textbf{0.88} & 0.85 & \textbf{0.88} & 0.57 & 0.57 \\
 & 3 needles & 25\% & 1.00 & 0.12 & \textbf{0.15} & \textbf{0.15} & 0.03 & 0.14 \\
 & 3 needles & 50\% & 1.00 & 0.70 & \textbf{0.76} & 0.74 & 0.39 & 0.39 \\
 & kv retrieval & 25\% & 1.00 & 0.00 & 0.00 & 0.00 & 0.00 & \textbf{0.09} \\
 & kv retrieval & 50\% & 1.00 & 0.03 & \textbf{0.04} & 0.01 & 0.01 & 0.01 \\
 & var.\ tracking & 25\% & 0.10 & \textbf{0.01} & 0.00 & \textbf{0.01} & 0.00 & 0.00 \\
 & var.\ tracking & 50\% & 0.10 & 0.01 & 0.01 & 0.03 & \textbf{0.09} & \textbf{0.09} \\
\midrule
Qwen2.5-7B-Instruct & needle & 25\% & 1.00 & 0.00 & 0.00 & 0.00 & 0.00 & \textbf{0.57} \\
 & needle & 50\% & 1.00 & 0.10 & \textbf{0.12} & 0.10 & 0.03 & 0.03 \\
 & 3 needles & 25\% & 1.00 & 0.00 & 0.00 & 0.00 & 0.00 & \textbf{0.80} \\
 & 3 needles & 50\% & 1.00 & 0.04 & 0.04 & 0.04 & 0.01 & \textbf{0.06} \\
 & kv retrieval & 25\% & 0.97 & 0.00 & 0.00 & 0.00 & 0.00 & \textbf{0.97} \\
 & kv retrieval & 50\% & 0.97 & 0.00 & 0.00 & 0.00 & 0.00 & \textbf{0.07} \\
 & var.\ tracking & 25\% & 0.04 & 0.00 & 0.00 & 0.00 & 0.00 & \textbf{0.03} \\
 & var.\ tracking & 50\% & 0.04 & 0.00 & 0.00 & 0.00 & \textbf{0.01} & \textbf{0.01} \\
\midrule
Qwen3-8B & needle & 25\% & 1.00 & 0.01 & 0.04 & 0.01 & 0.00 & \textbf{0.33} \\
 & needle & 50\% & 1.00 & 0.10 & \textbf{0.25} & 0.15 & 0.05 & 0.05 \\
 & 3 needles & 25\% & 1.00 & 0.00 & 0.00 & 0.00 & 0.00 & \textbf{0.66} \\
 & 3 needles & 50\% & 1.00 & 0.00 & \textbf{0.06} & 0.01 & 0.00 & 0.01 \\
 & kv retrieval & 25\% & 1.00 & 0.00 & 0.00 & 0.00 & 0.00 & \textbf{0.64} \\
 & kv retrieval & 50\% & 1.00 & 0.00 & 0.00 & 0.00 & 0.00 & 0.00 \\
 & var.\ tracking & 25\% & 0.80 & 0.00 & 0.00 & 0.00 & 0.00 & \textbf{0.32} \\
 & var.\ tracking & 50\% & 0.80 & 0.00 & \textbf{0.04} & 0.00 & 0.00 & 0.00 \\
\midrule
phi-4 & needle & 25\% & 1.00 & 0.17 & 0.23 & 0.20 & 0.09 & \textbf{0.51} \\
 & needle & 50\% & 1.00 & 0.85 & 0.82 & \textbf{0.89} & 0.42 & 0.42 \\
 & 3 needles & 25\% & 1.00 & 0.00 & 0.01 & 0.03 & 0.04 & \textbf{0.68} \\
 & 3 needles & 50\% & 1.00 & \textbf{0.46} & 0.45 & \textbf{0.46} & 0.16 & 0.16 \\
 & kv retrieval & 25\% & 1.00 & 0.00 & 0.00 & 0.00 & 0.00 & \textbf{0.81} \\
 & kv retrieval & 50\% & 1.00 & \textbf{0.04} & 0.03 & \textbf{0.04} & 0.03 & 0.03 \\
 & var.\ tracking & 25\% & 0.31 & 0.06 & 0.01 & 0.07 & 0.04 & \textbf{0.23} \\
 & var.\ tracking & 50\% & 0.31 & 0.23 & 0.03 & \textbf{0.25} & 0.12 & 0.12 \\
\midrule
mean & all & 25\% & 0.83 & 0.05 & 0.05 & 0.06 & 0.02 & 0.43 \\
mean & all & 50\% & 0.83 & 0.21 & 0.22 & 0.22 & 0.12 & 0.12 \\
\bottomrule
\end{tabular}}
\caption{Deterministic baselines with budget allocation (PyramidKV, layer-wise budgets from 1.5$\times$ to 0.5$\times$ the mean) and compensation (CaM-style merging of evicted values into the nearest retained key) on the synthetic suite, stream condition, certified configuration. Bold marks the best compressed system in each row. Allocation and merging change which tokens go and what the retained values hold; neither produces an estimate of the eviction error, and the certificate-gated system stays ahead where the damage is heavy.}
\label{tab:base}
\end{table}

\begin{table}[h]
\centering
\small
\begin{tabular}{@{}lcccc@{}}
\toprule
Model & \multicolumn{2}{c}{6k} & \multicolumn{2}{c}{16k} \\
& AUC & 95\% CI & AUC & 95\% CI \\
\midrule
Qwen2.5-1.5B-Instruct & 0.581 & [0.555, 0.609] & 0.593 & [0.567, 0.621] \\
Qwen2.5-7B-Instruct & 0.558 & [0.535, 0.581] & 0.602 & [0.583, 0.622] \\
Llama-3.1-8B-Instruct & 0.565 & [0.545, 0.587] & 0.582 & [0.567, 0.599] \\
Mistral-7B-Instruct-v0.3 & 0.552 & [0.532, 0.574] & 0.596 & [0.580, 0.613] \\
\midrule
pooled & 0.555 & [0.543, 0.568] & 0.572 & [0.562, 0.582] \\
\bottomrule
\end{tabular}
\caption{R1 detail: certificate-to-own-failure AUC, LongBench only.
Every model exceeds chance; none reaches the pre-registered 0.6 line.
Cell-level: 24/32 cells above 0.5 at 6k, 30/32 at 16k.}
\label{tab:permodel}
\end{table}

\begin{table}[h]
\centering
\scriptsize
\setlength{\tabcolsep}{3.5pt}
\begin{tabular}{@{}lccccc@{}}
\toprule
Arm & certificate (ours) & entropy & evicted mass & margin & output logprob
(baseline) \\
\midrule
\multicolumn{6}{@{}l}{\emph{Overall failure (6k suite, pooled)}} \\
StreamingLLM & -- & .472 [.45,.50] & -- & -- & \textbf{.822} [.80,.84] \\
H2O & -- & .521 [.50,.54] & .590 [.58,.61] & .503 [.50,.51] &
\textbf{.772} [.75,.79] \\
SnapKV & -- & .538 [.51,.56] & .592 [.58,.61] & .502 [.50,.51] &
\textbf{.771} [.75,.79] \\
aware top-$k$ & -- & .580 [.56,.61] & .502 [.49,.52] & .499 [.50,.50] &
\textbf{.799} [.78,.82] \\
Poisson & .645 [.63,.66] & .527 [.50,.55] & -- & -- & \textbf{.794} [.78,.81] \\
\midrule
\multicolumn{6}{@{}l}{\emph{Eviction-induced failure (full-correct
examples, 6k suite)}} \\
StreamingLLM & -- & .455 [.42,.49] & -- & -- & \textbf{.861} [.84,.88] \\
H2O & -- & .501 [.47,.53] & .660 [.64,.68] & .500 [.49,.51] &
\textbf{.831} [.81,.85] \\
SnapKV & -- & .509 [.48,.54] & .726 [.71,.74] & .501 [.50,.51] &
\textbf{.834} [.82,.85] \\
aware top-$k$ & -- & .699 [.61,.78] & .575 [.51,.65] & .493 [.48,.50] &
\textbf{.852} [.81,.89] \\
Poisson & \textbf{.855} [.84,.87] & .446 [.43,.47] & -- & -- & .848 [.83,.86] \\
\bottomrule
\end{tabular}
\caption{The full self-signal panel: AUC predicting the arm's own
failure, with 95\% cluster-bootstrap intervals (resampling examples);
best signal per row in bold. Output log-probability is the comparison
baseline and is strong everywhere, which is why prediction is conceded
to it; no retained-set signal of a deterministic arm approaches the
certificate on induced failures.}
\label{tab:panel}
\end{table}

\begin{table}[h]
\centering
\scriptsize
\setlength{\tabcolsep}{4pt}
\begin{tabular}{@{}llccccccc@{}}
\toprule
Suite/budget & seed & base & full & cert (ours) & logprob & random
(baseline) & cert$-$random [95\% CI] & cert$-$logprob [95\% CI] \\
\midrule
6k / 25\% & 0 & .296 & .436 & \textbf{.358} & .346 & .331 &
$+.027$ [$+.015,+.039$] & $+.013$ [$-.004,+.030$] \\
 & 1 & .284 & .436 & \textbf{.352} & .338 & .322 &
$+.030$ [$+.019,+.043$] & $+.014$ [$-.003,+.033$] \\
16k / 12.5\% & 0 & .240 & .450 & \textbf{.329} & .281 & .292 &
$+.037$ [$+.023,+.049$] & $+.049$ [$+.031,+.066$] \\
 & 1 & .247 & .450 & \textbf{.334} & .291 & .298 &
$+.037$ [$+.024,+.050$] & $+.044$ [$+.026,+.062$] \\
16k / 25\% & 0 & .343 & .450 & \textbf{.389} & .372 & .369 &
$+.019$ [$+.010,+.030$] & $+.016$ [$+.002,+.032$] \\
 & 1 & .346 & .450 & \textbf{.394} & .367 & .372 &
$+.022$ [$+.011,+.032$] & $+.027$ [$+.010,+.043$] \\
\bottomrule
\end{tabular}
\caption{Certificate gating recovers the most accuracy in every
suite, budget, and seed. Gated recomputation (stream condition,
LongBench tasks only, 25\% re-run rate) with per-seed replication.
Random gating is the exact expectation $0.25\,\mathrm{acc}_{\mathrm{full}} +
0.75\,\mathrm{acc}_{\mathrm{base}}$; intervals are cluster bootstrap
over examples. The two-draw attributor (partner-draw success and the
F1 gap between draws, label-assisted proxies) scores AUC 0.632 and
0.687 at 6k and 0.626 and 0.620 at 16k against the certificate's 0.749
and 0.727 from a single draw; recomputation gated on two-draw
disagreement (fires on 15\% of queries) reaches .314/.274/.373 in the
three suite/budget rows and trails certificate gating at the matched
rate (.331--.343/.289--.306/.376--.379) in every configuration. At
natural 23k-token lengths the full cache answers 27\%, the headroom
shrinks to ten points, and at $n=200$ no gating signal separates from
random: budgeted recomputation is a product for regimes with
headroom.}
\label{tab:dprime}
\end{table}

\begin{figure}[h]
\centering
\includegraphics[width=0.75\linewidth]{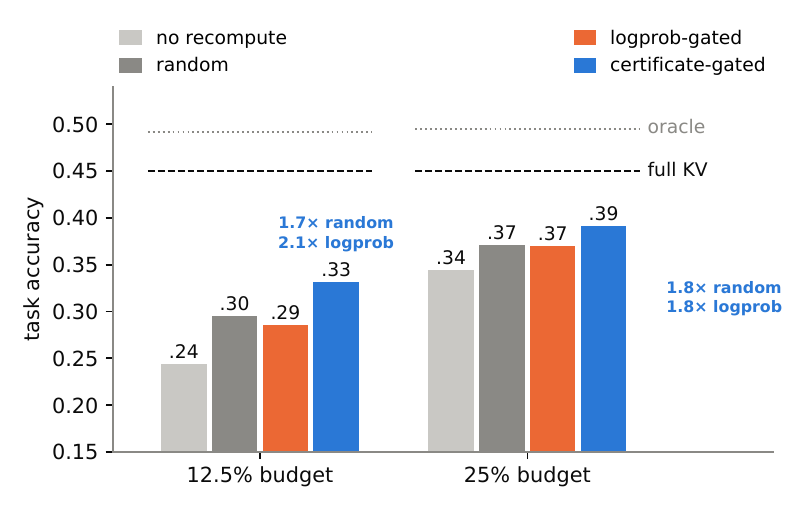}
\caption{Certificate gating recovers more accuracy than random or
confidence gating at matched compute. Streaming compression at 16k,
25\% of queries re-run with the full cache; bars average the two
Poisson seeds, and random gating is scored by its exact expectation.}
\label{fig:recompute}
\end{figure}

\begin{table}[h]
\centering
\small
\begin{tabular}{@{}lccccccc@{}}
\toprule
Task (16k) & full & H2O@25\% & H2O@50\% & aware@25\% & Poisson@25\% &
Poisson@50\% \\
\midrule
HotpotQA & 0.515 & 0.420 & 0.470 & \textbf{0.530} & 0.516 & 0.524 \\
MuSiQue & 0.260 & 0.180 & 0.215 & 0.258 & \textbf{0.263} & 0.254 \\
NarrativeQA & 0.205 & 0.147 & 0.195 & \textbf{0.200} & 0.198 & 0.196 \\
PassageRetrieval & 0.820 & 0.445 & 0.738 & 0.818 & 0.815 & \textbf{0.821} \\
\bottomrule
\end{tabular}
\caption{Question-aware compression tracks the full cache and streaming
compression does not. Accuracy by task at 16k, pooled over models
(aware condition for aware/Poisson columns; H2O is the stream-condition
baseline); best compressed arm per row in bold.}
\label{tab:acc16k}
\end{table}

\begin{table}[h]
\centering
\small
\begin{tabular}{@{}lcccccc@{}}
\toprule
& \multicolumn{2}{c}{AUC (failure)} & \multicolumn{2}{c}{gated accuracy}
& \multicolumn{2}{c}{certificate median} \\
budget & 7B & 32B & 7B & 32B & 7B & 32B \\
\midrule
20\% & 0.866 & 0.800 & 0.885 & 0.975 & 1.11 & 1.24 \\
25\% & 0.841 & 0.742 & 0.600 & 0.823 & 0.99 & 1.09 \\
30\% & 0.799 & 0.695 & 0.347 & 0.578 & 0.89 & 0.97 \\
\bottomrule
\end{tabular}
\caption{The certificate survives a 4.6$\times$ scale-up: AUC drops by 0.07--0.10, certificate medians rise, the red flag fires more often (0.92/0.71/0.43 against 0.80/0.48/0.19), and gated accuracy improves at every budget. Qwen2.5-7B against Qwen2.5-32B on the ten-turn vignette, identical dialogues, seeds, and scoring, offset on; $\tau = 1$ is unchanged.}
\label{tab:scale}
\end{table}

\begin{table}[h]
\centering
\small
\begin{tabular}{@{}lcc@{}}
\toprule
re-answer policy & retrieved & accuracy \\
\midrule
full archive & 100\% & 1.000 \\
oracle (fact block first) & 10\% & 0.894 \\
contribution-guided blocks (ours) & 10\% & \textbf{0.702} \\
random blocks (baseline) & 10\% & 0.181 \\
\bottomrule
\end{tabular}
\caption{Certificate-guided retrieval of 10\% of the context recovers 3.9$\times$ the accuracy of random retrieval at the same budget. The 94 red-flagged answers of the fine-grained probe (7B, 25\% budget, offset on); evicted 64-token blocks ranked by the variance contribution of their retained neighbors, which places the block holding the queried fact first in 64\% of cases and in the top three in 79\% (of roughly seventeen blocks).}
\label{tab:finegrain}
\end{table}

\end{document}